\crefname{section}{Sec.}{Secs.}
\Crefname{section}{Section}{Sections}
\Crefname{table}{Table}{Tables}
\crefname{table}{Tab.}{Tabs.}
\newtheorem{prop}{Proposition}
\newtheorem*{prop*}{Proposition}
\definecolor{codegreen}{rgb}{0,0.6,0}
\definecolor{codegray}{rgb}{0.5,0.5,0.5}
\definecolor{codepurple}{rgb}{0.58,0,0.82}
\definecolor{backcolour}{rgb}{0.95,0.95,0.92}
\definecolor{light-gray}{gray}{0.9}
\definecolor{light-green}{rgb}{0.82, 0.94, 0.75}
\lstdefinestyle{mystyle}{
    backgroundcolor=\color{backcolour},   
    commentstyle=\color{codegreen},
    keywordstyle=\color{magenta},
    numberstyle=\tiny\color{codegray},
    stringstyle=\color{codepurple},
    basicstyle=\ttfamily\footnotesize,
    breakatwhitespace=false,         
    breaklines=true,                 
    captionpos=b,                    
    keepspaces=true,                 
    numbers=left,                    
    numbersep=5pt,                  
    showspaces=false,                
    showstringspaces=false,
    showtabs=false,                  
    tabsize=2,
    showlines=true
}
\theoremstyle{thmstyleone}%
\theoremstyle{thmstyletwo}%
\theoremstyle{thmstylethree}%
\begin{document}

\title[]{Similarity Contrastive Estimation for Image and Video Soft Contrastive Self-Supervised Learning}

%%=============================================================%%
%% Prefix	-> \pfx{Dr}
%% GivenName	-> \fnm{Joergen W.}
%% Particle	-> \spfx{van der} -> surname prefix
%% FamilyName	-> \sur{Ploeg}
%% Suffix	-> \sfx{IV}
%% NatureName	-> \tanm{Poet Laureate} -> Title after name
%% Degrees	-> \dgr{MSc, PhD}
%% \author*[1,2]{\pfx{Dr} \fnm{Joergen W.} \spfx{van der} \sur{Ploeg} \sfx{IV} \tanm{Poet Laureate} 
%%                 \dgr{MSc, PhD}}\email{iauthor@gmail.com}
%%=============================================================%%

\author*[1,2]{\fnm{Julien} \sur{Denize}}\email{julien.denize@cea.fr}

\author[1]{\fnm{Jaonary} \sur{Rabarisoa}}\email{jaonary.rabarisoa@cea.fr}
\equalcont{These authors contributed equally to this work.}

\author[1]{\fnm{Astrid} \sur{Orcesi}}\email{astrid.orcesi@cea.fr}
\equalcont{These authors contributed equally to this work.}

\author[2]{\fnm{Romain} \sur{Hérault}}\email{romain.herault@insa-rouen.fr}
\equalcont{These authors contributed equally to this work.}

\affil*[1]{\orgdiv{LVA}, \orgname{CEA LIST, Université Paris-Saclay}, \orgaddress{\city{Palaiseau}, \postcode{F-91120}, \country{France}}}

\affil*[2]{\orgdiv{LITIS}, \orgname{INSA Rouen, Normandie Univ}, \orgaddress{\city{Saint Etienne du Rouvray}, \postcode{76801}, \country{France}}}

%%==================================%%
%% sample for unstructured abstract %%
%%==================================%%

\abstract{Contrastive representation learning has proven to be an effective self-supervised learning method for images and videos. Most successful approaches are based on Noise Contrastive Estimation (NCE) and use different views of an instance as positives that should be contrasted with other instances, called negatives, that are considered as noise. However, several instances in a dataset are drawn from the same distribution and share underlying semantic information. A good data representation should contain relations between the instances, or semantic similarity and dissimilarity, that contrastive learning harms by considering all negatives as noise. To circumvent this issue, we propose a novel formulation of contrastive learning using semantic similarity between instances called Similarity Contrastive Estimation (SCE). Our training objective is a soft contrastive one that brings the positives closer and estimates a continuous distribution to push or pull negative instances based on their learned similarities. We validate empirically our approach on both image and video representation learning. We show that SCE performs competitively with the state of the art on the ImageNet linear evaluation protocol for fewer pretraining epochs and that it generalizes to several downstream image tasks. We also show that SCE reaches state-of-the-art results for pretraining video representation and that the learned representation can generalize to video downstream tasks.}

\keywords{Self-Supervised learning, Contrastive, Representation}

\maketitle

\section{Introduction}
\label{sec:intro}

Self-Supervised learning (SSL) is an unsupervised learning procedure in which the data provides its own supervision to learn a practical representation of the data. A pretext task is designed to make this supervision. The pretrained model is then fine-tuned on downstream tasks and several works have shown that a self-supervised pretrained network can outperform its supervised counterpart for image \citep{Caron2020, Grill2020, Caron2021} and video \citep{Feichtenhofer2021, Duan2022}. It has been successfully applied to various image and video applications such as image classification, action classification, object detection and action localization.

Contrastive learning is a state-of-the-art self-supervised paradigm based on Noise Contrastive Estimation (NCE) \citep{Gutmann2010} whose most successful applications rely on instance discrimination \citep{He2020, Chen2020b, Yang2020, Han2020b}. Pairs of views from same images or videos are generated by carefully designed data augmentations \citep{Chen2020b, Tian2020a, Feichtenhofer2021}. Elements from the same pairs are called \emph{positives} and their representations are pulled together to learn view invariant features. Other instances called \emph{negatives} are considered as noise and their representations are pushed away from positives. Frameworks based on contrastive learning paradigm require a procedure to sample positives and negatives to learn a good data representation. Videos add the time dimension that offers more possibilities than images to generate positives such as sampling different clips as positives \citep{Feichtenhofer2021, Qian2021}, using different temporal context \citep{Pan2021, Recasens2021, Dave2022}.

A large number of negatives is essential \citep{VanDenOord2018} and various strategies have been proposed to enhance the number of negatives \citep{Chen2020b, Wu2018, He2020, Kalantidis2020}. Sampling hard negatives \citep{Kalantidis2020, Robinson2021, Wu2021, Hu2021, Dwibedi2021} improve the representations but can be harmful if they are semantically false negatives which causes the ``class collision problem" \citep{Cai2020, Wei2021, Chuang}. 

Other approaches that learn from positive views without negatives have been proposed by predicting pseudo-classes of different views \citep{Caron2020, Caron2021, Toering2022}, minimizing the feature distance of positives \citep{Grill2020, Chen2021a, Feichtenhofer2021} or matching the similarity distribution between views and other instances \citep{Zheng2021}. These methods free the mentioned problem of sampling hard negatives.

Based on the weakness of contrastive learning using negatives, we introduce a self-supervised soft contrastive learning approach called Similarity Contrastive Estimation (SCE), that contrasts positive pairs with other instances and leverages the push of negatives using the inter-instance similarities. Our method computes relations defined as a sharpened similarity distribution between augmented views of a batch. Each view from the batch is paired with a differently augmented query. Our objective function will maintain for each query the relations and contrast its positive with other images or videos. A memory buffer is maintained to produce a meaningful distribution. Experiments on several datasets show that our approach outperforms our contrastive and relational baselines MoCov2 \citep{Chen2020a} and ReSSL \citep{Zheng2021} on images. We also demonstrate using relations for video representation learning is better than contrastive learning.

Our contributions can be summarized as follows:
\begin{itemize}[noitemsep,topsep=0pt,leftmargin=*]
\item We propose a self-supervised soft contrastive learning approach called Similarity Contrastive Estimation (SCE) that contrasts pairs of augmented instances with other instances and maintains relations among instances for either image or video representation learning. 
\item We demonstrate that SCE outperforms on several benchmarks its baselines MoCov2 \citep{Chen2020a} and ReSSL \citep{Zheng2021} on images on the same architecture.
\item We show that our proposed SCE is competitive with the state of the art on the ImageNet linear evaluation protocol and generalizes to several image downstream tasks.
\item We show that our proposed SCE reaches state-of-the-art results for video representation learning by pretraining on the Kinetics400 dataset as we beat or match previous top-1 accuracy for finetuning on HMDB51 and UCF101 for ResNet3D-18 and ResNet3D-50. We also demonstrate it generalizes to several video downstream tasks.
\end{itemize}

\section{Related Work}\label{Related Works}

\subsection{Image Self-Supervised Learning}

\textbf{Early Self-Supervised Learning.} In early works, different \emph{pretext tasks} to perform Self-Supervised Learning have been proposed to learn a good data representation. They consist in transforming the input data or part of it to perform supervision such as: instance discrimination \citep{Dosovitskiy2016}, patch localization \citep{Doersch2015}, colorization \citep{Zhang2016}, jigsaw puzzle \citep{Noroozi2016}, counting \citep{Noroozi2017}, angle rotation prediction \citep{Gidaris2018}.

\textbf{Contrastive Learning.} Contrastive learning is a learning paradigm \citep{VanDenOord2018, Wu2018, DevonHjelm2018a, Tian2020, He2020, Chen2020b, Misra2020, Tian2020a, Caron2020, Grill2020, Dwibedi2021, Hu2021, Wang2021a} that outperformed previously mentioned \emph{pretext tasks}. Most successful methods rely on instance discrimination with a \emph{positive} pair of views from the same image contrasted with all other instances called \emph{negatives}. Retrieving lots of negatives is necessary for contrastive learning \citep{VanDenOord2018} and various strategies have been proposed. MoCo(v2) \citep{He2020, Chen2020a} uses a small batch size and keeps a high number of negatives by maintaining a memory buffer of representations via a momentum encoder. Alternatively, SimCLR \citep{Chen2020b, Chen2020d} and MoCov3 \citep{Chen2021b} use a large batch size without a memory buffer, and without a momentum encoder for SimCLR.

\textbf{Sampler for Contrastive Learning.} All negatives are not equal \citep{Cai2020} and hard negatives, negatives that are difficult to distinguish with positives, are the most important to sample to improve contrastive learning. However, they are potentially harmful to the training because of the ``class collision" problem \citep{Cai2020, Wei2021, Chuang}. Several samplers have been proposed to alleviate this problem such as using the nearest neighbor as positive for NNCLR \citep{Dwibedi2021}. Truncated-triplet \citep{Wang2021a} optimizes a triplet loss using the k-th similar element as negative that showed significant improvement. It is also possible to generate views by adversarial learning as AdCo \citep{Hu2021} showed.

\textbf{Contrastive Learning without negatives.} Various siamese frameworks perform contrastive learning without the use of negatives to avoid the class collision problem. BYOL \citep{Grill2020} trains an online encoder to predict the output of a momentum updated target encoder. SwAV \citep{Caron2020} enforces consistency between online cluster assignments from learned prototypes. DINO \citep{Caron2021} proposes a self-distillation paradigm to match distribution on pseudo class from an online encoder to a momentum target encoder. Barlow-Twins \citep{Zbontar2021} aligns the cross-correlation matrix between two paired outputs to the identity matrix that VICReg \citep{Bardes2022} stabilizes by adding an intra-batch decorrelation loss function.

\textbf{Regularized Contrastive Learning.} Several works regularize contrastive learning by optimizing a contrastive objective along with an objective that considers the similarities among instances. CO2 \citep{Wei2021} adds a consistency regularization term that matches the distribution of similarity for a query and its positive. PCL \citep{Li2021} and WCL \citep{Zheng2021b} combines unsupervised clustering with contrastive learning to tighten representations of similar instances.  

\textbf{Relational Learning.} Contrastive learning implicitly learns relations among instances by optimizing alignment and matching a prior distribution \citep{Wang2020, Chen2020c}. ReSSL \citep{Zheng2021} introduces an explicit relational learning objective by maintaining consistency of pairwise similarities between strong and weak augmented views. The pairs of views are not directly aligned which harms the discriminative performance. 

In our work, we optimize a contrastive learning objective using negatives that alleviate class collision by pulling related instances. We do not use a regularization term but directly optimize a soft contrastive learning objective that leverages the contrastive and relational aspects. 

\subsection{Video Self-Supervised Learning}

Video Self-Supervised Learning follows the advances of Image Self-Supervised Learning and often picked ideas from the image modality with adjustment and improvement to make it relevant for videos and make best use of it.

\textbf{Pretext tasks.} As for images, in early works several \emph{pretext tasks} have been proposed on videos. Some were directly picked from images such as rotation \citep{Jing2018}, solving Jigsaw puzzles \citep{Kim2019} but others have been designed specifically for videos. These specific pretext-tasks include predicting motion and appearance \citep{Wang2019}, the shuffling of frame \citep{Lee2017, Misra2016} or clip \citep{Xu2019, Jenni2020} order, predicting the speed of the video \citep{Benaim2020, Yao2020}. These methods have been replaced over time by more performing approaches that are less limited by a specific pretext task to learn a good representation. Recently, TransRank \citep{Duan2022} introduced a new paradigm to perform temporal and spatial pretext tasks prediction on a clip relatively to other transformations to the same clip and showed promising results. 

\textbf{Contrastive Learning.} Video Contrastive Learning \citep{Han2020b, Lorre2020, Yang2020, Pan2021, Qian2021, Qian2021a, Feichtenhofer2021, Recasens2021, Sun2021, Dave2022} has been widely studied in the recent years as it gained interest after its better performance than standard pretext tasks in images. Several works studied how to form positive views from different clips \citep{Han2020b, Qian2021, Feichtenhofer2021, Pan2021} to directly apply contrastive methods from images. CVRL \citep{Qian2021}  extended SimCLR to videos and propose a temporal sampler for creating temporally overlapped but not identical positive views which can avoid spatial redundancy. Also, \citet{Feichtenhofer2021} extended SimCLR, MoCo, SwaV and BYOL to videos and studied the effect of using random sampled clips from a video to form views. They pushed further the study to sample several positives to generalize the Multi-crop procedure introduced for images by \citet{Caron2020}. Some works focused on combining contrastive learning and predicting a pretext task \citep{Piergiovanni2020, Wang2020a, Chen2021, Hu2021, Huang2021, Jenni2021}. To help better represent the time dimension, several approaches were designed to use different temporal context width \citep{Pan2021, Recasens2021, Dave2022} for the different views. 

\textbf{Multi-modal Learning.} To improve self-supervised representation learning, several approaches made use of several modalities to better capture the spatio-temporal information provided by a video. It can be from text \citep{Sun2019,Miech2020}, audio \citep{Alwassel2020, Piergiovanni2020, Recasens2021}, and optical flow \citep{Han2020b, Lorre2020, Han2020a, Piergiovanni2020, Hu2021a, Recasens2021, Toering2022}. 

In our work, we propose a soft contrastive learning objective using only RGB frames that directly generalizes our approach from image with minor changes. To the best of our knowledge, we are the first to introduce the concept of soft contrastive learning using relations for video self-supervised representation learning.

\section{Methodology}\label{Methodology}

\begin{figure*}
\begin{subfigure}[b]{0.5\textwidth}
 \centering
\includegraphics[width=0.6\textwidth]{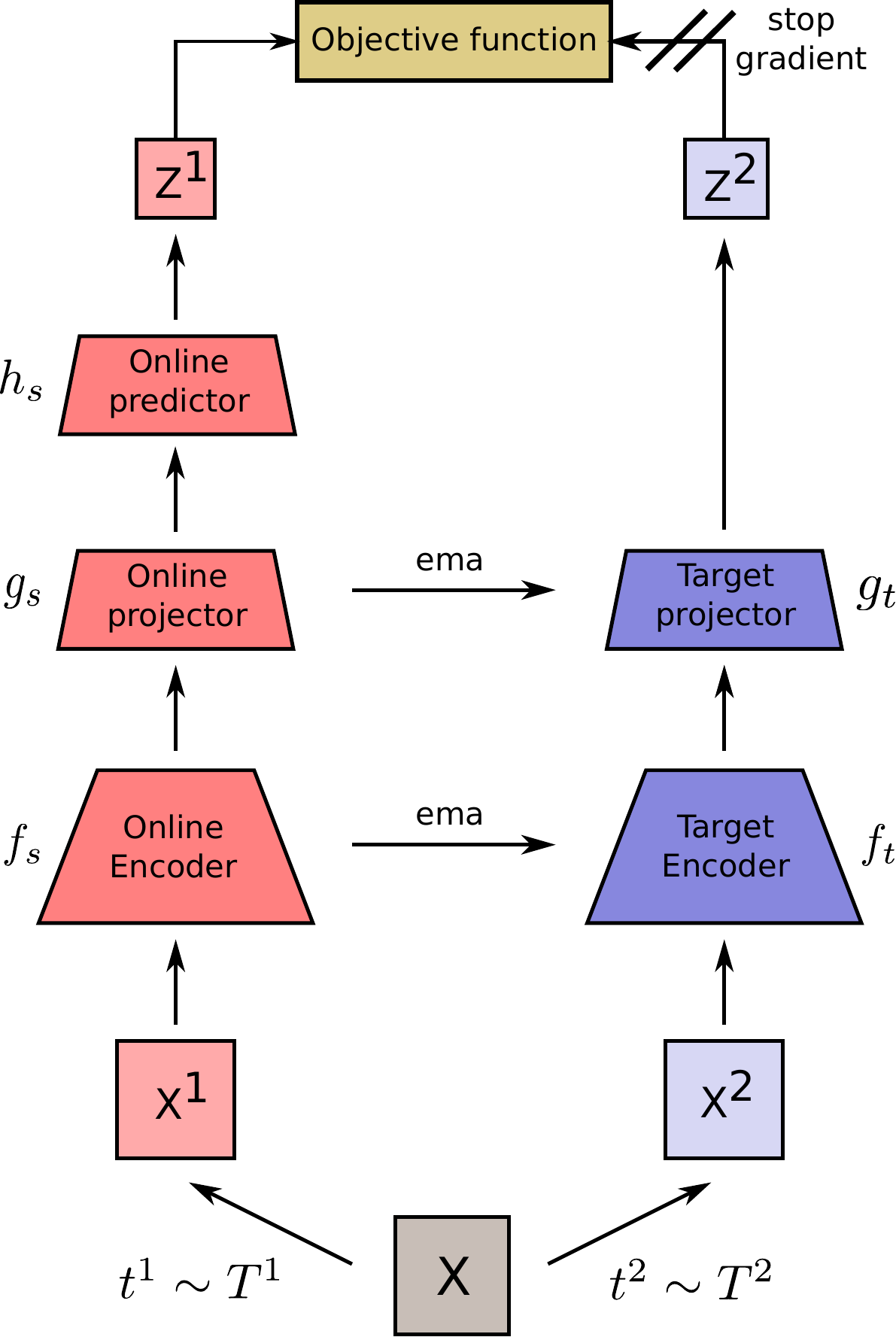}
 \caption{Siamese pipeline}
 \label{fig:siamese}
\end{subfigure}
\begin{subfigure}[b]{0.5\textwidth}
 \centering
\includegraphics[width=0.80\textwidth]{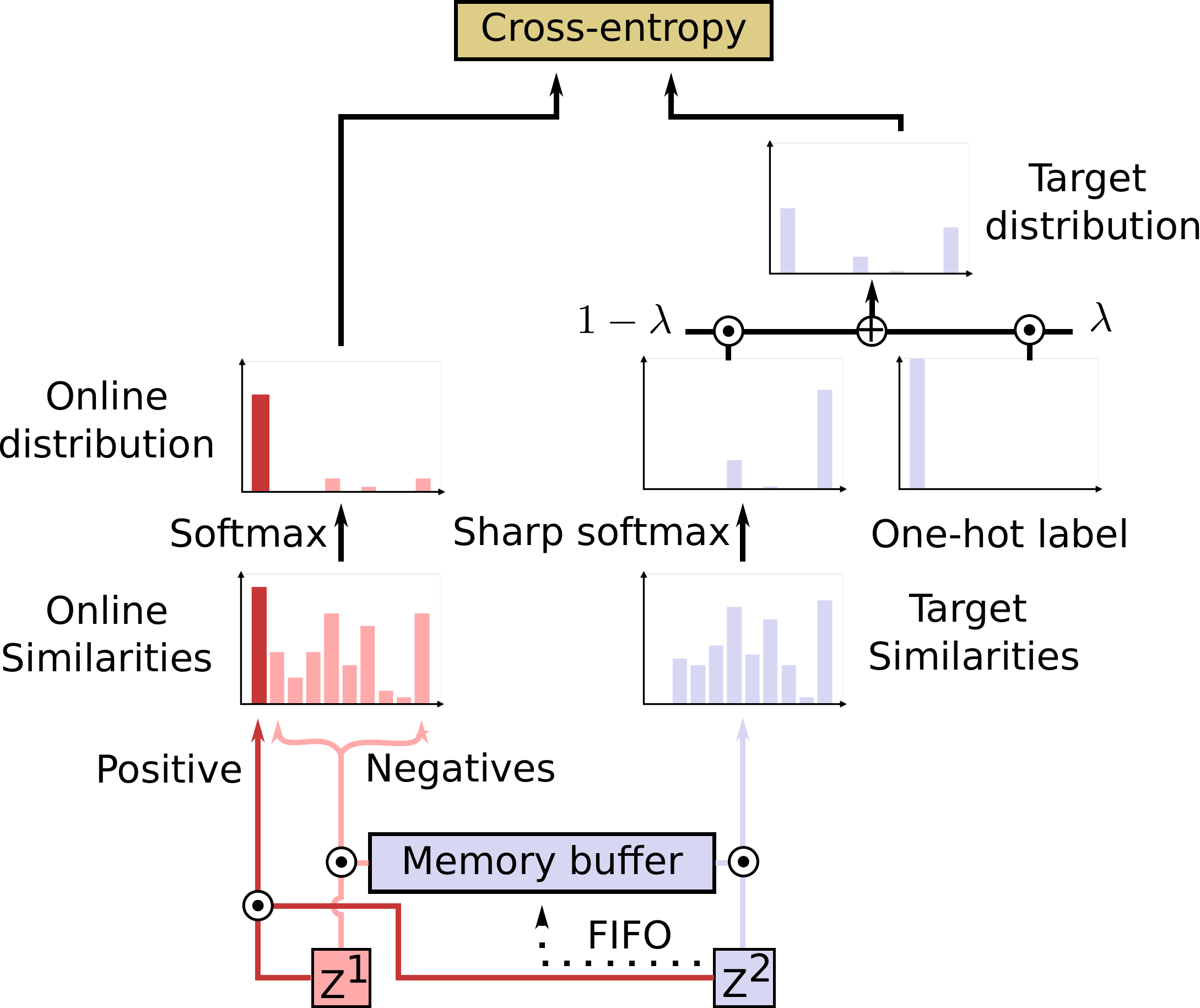}
 \caption{SCE objective function}
 \label{fig:sce}
\end{subfigure}
\caption{SCE follows a siamese pipeline illustrated in \cref{fig:siamese}. A batch $\mathbf{x}$ of images is augmented with two different data augmentation distributions $T^1$ and $T^2$  to form $\mathbf{x^1} = t^1(\mathbf{x})$ and $\mathbf{x^2} = t^2(\mathbf{x})$ with $t^1 \sim T^1$ and $t^2 \sim T^2$. The representation $\mathbf{z^1}$ is computed through an online encoder $f_s$, projector $g_s$ and optionally a predictor $h_s$ such as $\mathbf{z^1} = h_s(g_s(f_s(\mathbf{x^1})))$. A parallel target branch updated by an exponential moving average of the online branch, or ema, computes $\mathbf{z^2} = g_t(f_t(\mathbf{x^2}))$ with $f_t$ and $g_t$ the target encoder and projector. In the objective function of SCE illustrated in \cref{fig:sce}, $\mathbf{z^2}$ is used to compute the inter-instance target distribution by applying a sharp softmax to the cosine similarities between $\mathbf{z^2}$ and a memory buffer of representations from the momentum branch. This distribution is mixed via a $1 - \lambda$ factor with a one-hot label factor $\lambda$ to form the target distribution. Similarities between $\mathbf{z^1}$ and the memory buffer plus its positive in $\mathbf{z^2}$ are also computed. The online distribution is computed via softmax applied to the online similarities. The objective function is the cross entropy between the target and the online distributions.}
\label{fig:pipeline}
\end{figure*}

In this section, we will introduce our baselines: MoCov2 \citep{Chen2020a} for the contrastive aspect and ReSSL \citep{Zheng2021} for the relational aspect. We will then present our self-supervised soft contrastive learning approach called Similarity Contrastive Estimation (SCE). All these methods share the same architecture illustrated in \cref{fig:siamese}. We provide the pseudo-code of our algorithm in Appendix A.

\subsection{Contrastive and Relational Learning}\label{Contrastive and Relational Learning}
Consider $\mathbf{x}=\{\mathbf{x_k}\}_{k\in \{1, ..., N\}}$ a batch of $N$ images. Siamese momentum methods based on Contrastive and Relational learning, such as MoCo \citep{He2020} and ReSSL \citep{Zheng2021} respectively, produce two views of $\mathbf{x}$, $\mathbf{x^1} = t^1(\mathbf{x})$ and $\mathbf{x^2} = t^2(\mathbf{x})$, from two data augmentation distributions $T^1$ and $T^2$ with $t^1 \sim T^1$ and $t^2 \sim T^2$.  For ReSSL, $T^2$ is a weak data augmentation distribution compared to $T^1$ to maintain relations. $\mathbf{x^1}$ passes through an online network $f_s$ followed by a projector $g_s$ to compute $\mathbf{z^1} = g_s(f_s(\mathbf{x^1}))$. A parallel target branch containing a projector $g_t$ and an encoder $f_t$ computes $\mathbf{z^2} = g_t(f_t(\mathbf{x^2}))$. $\mathbf{z^1}$ and $\mathbf{z^2}$ are both $l_2$-normalized. 

The online branch parameters $\theta_s$ are updated by gradient ($\nabla$) descent to minimize a loss function $ \mathcal{L}$. The target branch parameters $\theta_t$ are updated at each iteration by exponential moving average of the online branch parameters with the \emph{momentum value} $m$, also called \emph{keep rate}, to control the update such as:
\begin{align}
    & \theta_s \leftarrow optimizer(\theta_s, \nabla_{\theta_s}\mathcal{L}), \\
    & \theta_t \leftarrow m\theta_t + (1 - m) \theta_s.
\end{align}

MoCo uses the InfoNCE loss, a similarity based function scaled by the temperature $\tau$ that maximizes agreement between the positive pair and push negatives away:

\begin{equation}
  L_{InfoNCE} = - \frac{1}{N} \sum_{i=1}^N \log\left(\frac{\exp(\mathbf{z^1_i} \cdot \mathbf{z^2_i} / \tau)}{\sum_{j=1}^N\exp(\mathbf{z^1_i} \cdot \mathbf{z^2_j} / \tau)}\right).
  \label{eq:1}
\end{equation}

ReSSL computes a target similarity distribution $\mathbf{s^2}$, that represents the relations between weak augmented instances, and the distribution of similarity $\mathbf{s^{1}}$ between the strongly augmented instances with the weak augmented ones. Temperature parameters are applied to each distribution: $\tau$ for $\mathbf{s^{1}}$ and $\tau_m$ for $\mathbf{s^{2}}$ with $\tau > \tau_m$ to eliminate noisy relations. The loss function is the cross-entropy between $\mathbf{s^{2}}$ and $\mathbf{s^{1}}$:

\begin{equation}
  s^{1}_{ik} = \frac{\mathbbm{1}_{i \neq k} \cdot \exp(\mathbf{z^1_i} \cdot \mathbf{z^2_k} / \tau)}{\sum_{j=1}^{N}{\mathbbm{1}}_{i \neq j} \cdot \exp(\mathbf{z^1_i} \cdot \mathbf{z^2_j} / \tau)},
  \label{eq:2}
 \end{equation}
 
 \begin{equation}
  s^{2}_{ik} = \frac{{\mathbbm{1}}_{i \neq k} \cdot \exp(\mathbf{z^2_i} \cdot \mathbf{z^2_k} / \tau_m)}{\sum_{j=1}^{N}{\mathbbm{1}}_{i \neq j} \cdot \exp(\mathbf{z^2_i} \cdot \mathbf{z^2_j} / \tau_m)},
  \label{eq:3}
\end{equation}

\begin{equation}
  L_{ReSSL} = - \frac{1}{N} \sum_{i=1}^N\sum_{\substack{k=1 \\ k\neq i}}^N s^{2}_{ik} \log\left(s^{1}_{ik}\right).
  \label{eq:4}
\end{equation}

A memory buffer of size $M >> N$ filled by $\mathbf{z^2}$ is maintained for both methods.

\subsection{Similarity Contrastive Estimation}\label{Similarity Contrastive Estimation}

Contrastive Learning methods damage relations among instances which Relational Learning correctly build. However Relational Learning lacks the discriminating features that contrastive methods can learn. If we take the example of a dataset composed of cats and dogs, we want our model to be able to understand that two different cats share the same appearance but we also want our model to learn to distinguish details specific to each cat. Based on these requirements, we propose our approach called Similarity Contrastive Estimation (SCE).

We argue that there exists a true distribution of similarity $\mathbf{w_i^*}$ between a query $\mathbf{q_i}$ and the instances in a batch of $N$ images $\mathbf{x}=\{\mathbf{x_k}\}_{k\in \{1, ..., N\}}$, with $\mathbf{x_i}$ a positive view of $\mathbf{q_i}$. If we had access to $\mathbf{w_i^*}$, our training framework would estimate the similarity distribution $\mathbf{p_i}$ between $\mathbf{q_i}$ and all instances in $\mathbf{x}$, and minimize the cross-entropy between $\mathbf{w_i^*}$ and $\mathbf{p_i}$ which is a soft contrastive learning objective:

\begin{equation}
L_{SCE^*} = - \frac{1}{N}\sum_{i=1}^N\sum_{k=1}^N w^*_{ik}\log\left(p_{ik}\right).
\label{eq:5}
\end{equation}

$L_{SCE^*}$ is a soft contrastive approach that generalizes InfoNCE and ReSSL objectives. InfoNCE is a hard contrastive loss that estimates $\mathbf{w_i^*}$ with a one-hot label and ReSSL estimates $\mathbf{w_i^*}$ without the contrastive component. 

We propose an estimation of $\mathbf{w_i^*}$ based on contrastive and relational learning. We consider $\mathbf{x^1} = t^1(\mathbf{x})$ and $\mathbf{x^2} = t^2(\mathbf{x})$ generated from $\mathbf{x}$ using two data augmentations $t^1 \sim T^1$ and $t^2 \sim T^2$. Both augmentation distributions should be different to estimate different relations for each view as shown in \cref{Ablative study}. We compute $\mathbf{z^1} = h_s(g_s(f_s(\mathbf{x^1})))$ from the online encoder $f_s$, projector $g_s$ and optionally a predictor $h_s$ \citep{Grill2020, Chen2021b}). We also compute $\mathbf{z^2} = g_t(f_t(\mathbf{x^2}))$ from the target encoder $f_t$ and projector $g_t$. $\mathbf{z^1}$ and $\mathbf{z^2}$ are both $l_2$-normalized. 

The similarity distribution $\mathbf{s^2_i}$ that defines relations between the query and other instances is computed via the \cref{eq:3}. The temperature $\tau_m$ sharpens the distribution to only keep relevant relations. A weighted positive one-hot label is added to $\mathbf{s^2_i}$ to build the target similarity distribution $\mathbf{w^2_i}$:

\begin{equation}
  w^2_{ik} = \lambda \cdot \mathbbm{1}_{i=k} + (1 - \lambda) \cdot s^2_{ik}.
  \label{eq:7}
\end{equation}

The online similarity distribution $\mathbf{p^1_i}$ between $\mathbf{z^1_i}$ and $\mathbf{z^2}$, including the target positive representation in opposition with ReSSL, is computed and scaled by the temperature $\tau$ with $\tau > \tau_m$ to build a sharper target distribution:

\begin{equation}
  p^1_{ik} = \frac{\exp(\mathbf{z^1_i} \cdot \mathbf{z^2_k} / \tau)}{\sum_{j=1}^{N}\exp(\mathbf{z^1_i} \cdot \mathbf{z^2_j} / \tau)}.
  \label{eq:8}
\end{equation}

The objective function illustrated in \cref{fig:sce} is the cross-entropy between each $\mathbf{w^2}$ and $\mathbf{p^1}$:
\begin{equation}
  L_{SCE} = - \frac{1}{N} \sum_{i=1}^N\sum_{k=1}^N w^2_{ik} \log\left(p^1_{ik}\right).
  \label{eq:9}
\end{equation}

The loss can be symmetrized by passing $\mathbf{x^1}$ and $\mathbf{x^2}$ through the momentum and online encoders and averaging the two losses computed.

A memory buffer of size $M >> N$ filled by $\mathbf{z^2}$ is maintained to better approximate the similarity distributions. 

The following proposition explicitly shows that SCE optimizes a contrastive learning objective while maintaining inter-instance relations:
\begin{prop}
\label{prop:1}
$L_{SCE}$ defined in \cref{eq:9} can be written as:
\begin{equation}
  L_{SCE} = \lambda \cdot L_{InfoNCE} + \mu \cdot L_{ReSSL} +  \eta \cdot L_{Ceil},
  \label{eq:10}
\end{equation}
with $\mu = \eta = 1 - \lambda$ and 
\begin{equation*}
    L_{Ceil} = - \frac{1}{N} \sum_{i=1}^{N}\log\left(\frac{\sum_{j=1}^{N}\mathbbm{1}_{i \neq j} \cdot \exp(\mathbf{z^1_i} \cdot \mathbf{z^2_j} / \tau)}{\sum_{j=1}^{N} \exp(\mathbf{z^1_i} \cdot \mathbf{z^2_j} / \tau)}\right).
\end{equation*}
\end{prop}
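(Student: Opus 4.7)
The plan is to substitute the definition of the mixed target distribution $w^2_{ik}$ from \cref{eq:7} into the cross-entropy \cref{eq:9} and split $L_{SCE}$ into the ``one-hot'' part and the ``relational'' part. The one-hot part collapses via $\mathbbm{1}_{i=k}$ to a single-term sum $-\tfrac{\lambda}{N}\sum_i \log(p^1_{ii})$. This is almost $\lambda \cdot L_{InfoNCE}$, except that $p^1_{ii}$ uses the full denominator $\sum_{j=1}^N \exp(\mathbf{z^1_i}\cdot\mathbf{z^2_j}/\tau)$ defined in \cref{eq:8}, which is exactly the denominator appearing in \cref{eq:1}. So this piece is $\lambda \cdot L_{InfoNCE}$ directly.

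The relational part is $-\tfrac{1-\lambda}{N}\sum_{i,k} s^2_{ik}\log(p^1_{ik})$. Here I would use the key observation that $s^2_{ii}=0$ by the indicator in \cref{eq:3}, so the sum reduces to $k \neq i$. The main algebraic step — and the only mildly tricky one — is converting $\log(p^1_{ik})$ to $\log(s^1_{ik})$: the two distributions share numerators for $k\neq i$ but differ in denominators (full sum for $p^1$, leave-one-out sum for $s^1$). Writing
\begin{equation*}
\log(p^1_{ik}) = \log(s^1_{ik}) + \log\!\left(\frac{\sum_{j\neq i}\exp(\mathbf{z^1_i}\cdot\mathbf{z^2_j}/\tau)}{\sum_{j}\exp(\mathbf{z^1_i}\cdot\mathbf{z^2_j}/\tau)}\right),
\end{equation*}
the first piece yields $(1-\lambda) \cdot L_{ReSSL}$ after summing $s^2_{ik}\log(s^1_{ik})$ over $k \neq i$, matching \cref{eq:4}.

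For the second piece, the correction term inside $\log(\cdot)$ does not depend on $k$, so I would factor it out of the $k$-sum. The remaining weight is $\sum_{k\neq i} s^2_{ik}$, which equals $1$ because $\mathbf{s^2_i}$ is a probability distribution on the $N-1$ indices $k\neq i$ (this is immediate from \cref{eq:3}). This reduces the second piece to $-\tfrac{1-\lambda}{N}\sum_i \log\!\left(\frac{\sum_{j\neq i}\exp(\mathbf{z^1_i}\cdot\mathbf{z^2_j}/\tau)}{\sum_j \exp(\mathbf{z^1_i}\cdot\mathbf{z^2_j}/\tau)}\right) = (1-\lambda)\cdot L_{Ceil}$, as defined in the proposition.

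Assembling the three contributions gives $L_{SCE} = \lambda \cdot L_{InfoNCE} + (1-\lambda) \cdot L_{ReSSL} + (1-\lambda) \cdot L_{Ceil}$, so $\mu = \eta = 1-\lambda$ as claimed. The main ``obstacle'' is just bookkeeping: keeping straight the two different denominators in $p^1$ vs.\ $s^1$ and remembering that $s^2$ is normalized over $k\neq i$ (not over all $k$), which is precisely what makes the $L_{Ceil}$ term appear cleanly instead of mixing into $L_{ReSSL}$.
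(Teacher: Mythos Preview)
Your proposal is correct and follows essentially the same route as the paper's own proof: split $L_{SCE}$ via $w^2_{ik}=\lambda\,\mathbbm{1}_{i=k}+(1-\lambda)s^2_{ik}$, identify the one-hot part as $\lambda\,L_{InfoNCE}$, and then for the relational part rewrite $\log p^1_{ik}=\log s^1_{ik}+\log\big(\sum_{j\neq i}\exp(\cdot)/\sum_j\exp(\cdot)\big)$ and use $\sum_{k\neq i}s^2_{ik}=1$ to extract $(1-\lambda)L_{ReSSL}+(1-\lambda)L_{Ceil}$. The only cosmetic difference is that the paper first separates $k=i$ from $k\neq i$ in the outer sum before substituting $w^2_{ik}$, whereas you substitute first and then collapse the indicator; the algebra and key observations ($s^2_{ii}=0$, normalization of $\mathbf{s^2_i}$, denominator mismatch between $p^1$ and $s^1$) are identical.
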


The proof separates the positive term and negatives. It can be found in Appendix B. $L_{Ceil}$ leverages how similar the positives should be with hard negatives. Because our approach is a soft contrastive learning objective, we optimize the formulation in \cref{eq:9} and have the constraint $\mu = \eta = 1 - \lambda$. It frees our implementation from having three losses to optimize with two hyperparameters $\mu$ and $\eta$ to tune. Still, we performed a small study of the objective defined in \cref{eq:10} without this constraint to check if $L_{Ceil}$ improves results in \cref{Ablative study}.

\begin{table*}
    \centering
    \small
    \begin{tabular}{lccccc}
        \toprule
        Parameter                           & \emph{weak} & \emph{strong} & \emph{strong-$\alpha$} & \emph{strong-$\beta$} & \emph{strong-$\gamma$} \\ \midrule
        Random crop probability             & 1    & 1      & 1       & 1        & 1 \\ 
        Flip probability                    & 0.5  & 0.5    & 0.5     & 0.5      & 0.5 \\
        Color jittering probability         & 0.   & 0.8    & 0.8     & 0.8      & 0.8 \\
        Brightness adjustment max intensity & -    & 0.4    & 0.4     & 0.4      & 0.4 \\  
        Contrast adjustment max intensity   & -    & 0.4    & 0.4     & 0.4      & 0.4 \\
        Saturation adjustment max intensity & -    & 0.4    & 0.2     & 0.2      & 0.2 \\
        Hue adjustment max intensity        & -    & 0.1    & 0.1     & 0.1      & 0.1 \\
        Color dropping probability          & 0.   & 0.2    & 0.2     & 0.2      & 0.2 \\
        Gaussian blurring probability       & 0.   & 0.5    & 1.      & 0.1      & 0.5 \\
        Solarization probability            & 0.   & 0.     & 0.      & 0.2      & 0.2 \\ \bottomrule    
    \end{tabular}
    \caption{Different distributions of data augmentations applied to SCE. The \emph{weak} distribution is the same as ReSSL \citep{Zheng2021}, \emph{strong} is the standard contrastive data augmentation \citep{Chen2020b}. The \emph{strong-$\alpha$} and \emph{strong-$\beta$} are two distributions introduced by BYOL \citep{Grill2020}. Finally, \emph{strong-$\gamma$} is a mix between \emph{strong-$\alpha$} and \emph{strong-$\beta$}.}
    \label{tab:augmentations}
\end{table*}

\begin{table*}
  \centering
  \small
  \begin{tabular}{cccccccccccc}
    \toprule
    $\lambda$ & 0.    & 0.1   & 0.2   & 0.3   & 0.4   & 0.5   & 0.6   & 0.7   & 0.8   & 0.9   & 1.0 \\
    Top-1     & 81.5 & 81.8 & 82.5 & 82.8 & 82.9 & \textbf{82.9} & 82.2 & 81.6 & 81.8 & 81.8 & 81.1 \\
    \bottomrule
  \end{tabular}
  \caption{Effect of varying $\lambda$ on the Top-1 accuracy on ImageNet100. The optimal $\lambda$ is in [$0.4,0.5$] confirming that learning to discriminate and maintaining relations is best.}
  \label{tab:lambda}
\end{table*}

\section{Empirical study}\label{Empirical study}

In this section, we will empirically prove the relevance of our proposed Similarity Contrastive Estimation (SCE) self-supervised learning approach to learn a good data representation for both images and videos representation learning.

\subsection{Image study}\label{Image study}

In this section, we first make an ablative study of our approach SCE to find the best hyperparameters on images. Secondly, we compare SCE with its baselines MoCov2 \citep{Chen2020a} and ReSSL \citep{Zheng2021} for the same architecture. Finally, we evaluate SCE on the ImageNet Linear evaluation protocol and assess its generalization capacity on various tasks.

\subsubsection{Ablation study}\label{Ablative study}

To make the ablation study, we conducted experiments on ImageNet100 that has a close distribution to ImageNet, studied in \cref{ImageNet Linear Protocol}, with the advantage to require less resources to train. We keep implementation details close to ReSSL \citep{Zheng2021} and MoCov2 \citep{Chen2020a} to ensure fair comparison.

\textbf{Dataset.} ImageNet \citep{Deng2009} is a large dataset with 1k classes, almost 1.3M images in the training set and 50K images in the validation set. ImageNet100 is a selection of 100 classes from ImageNet whose classes have been selected randomly. We took the selected classes from \citep{Tian2020} referenced in Appendix C.

\textbf{Implementation details for pretraining.} We use the ResNet-50 \citep{He2016} encoder and pretrain for 200 epochs. We apply by default \emph{strong} and \emph{weak} data augmentations defined in \cref{tab:augmentations}. We do not use a predictor and we do not symmetry the loss by default. Specific hyper-parameter details can be found in Appendix D.1.

\textbf{Evaluation protocol.} To evaluate our pretrained encoders,  we train a linear classifier following \citep{Chen2020a, Zheng2021} that is detailed in Appendix D.1.

\begin{table}
    \centering
    \footnotesize
    \begin{tabular}{l|ccc|cc}
    \toprule
    Method & \multicolumn{3}{c|}{Loss coefficients} & \multicolumn{2}{c}{Top-1} \\
    & $\lambda $ & $\mu$ & $\eta$ & $\tau_m = 0.05$ & $\tau_m = 0.07$ \\ \midrule
    InfoNCE & 1.        & 0.    & 0.         & 81.11 & 81.11 \\
            & 0.5       & 0.5   & 0.         & 82.80  & 82.49 \\  
    SCE     & 0.5       & 0.5   & 0.5        & \textbf{82.94} & \textbf{83.37} \\
    ReSSL   & 0.        & 1.    & 0.         & 80.79 & 78.35 \\ 
            & 0.        & 1.    & 1.         & 81.53 & 79.64 \\ \bottomrule
\end{tabular}
\caption{Effect of loss coefficients in \cref{eq:10} on the Top-1 accuracy on ImageNet100. $L_{Ceil}$ consistently improves performance that varies given the temperature parameters.}
\label{tab:loss}
\end{table}

\textbf{Leveraging contrastive and relational learning.} 
SCE defined in \cref{eq:7} leverages contrastive and relational learning via the $\lambda$ coefficient. We studied the effect of varying the $\lambda$ coefficient on ImageNet100. Temperature parameters are set to $\tau = 0.1$ and $\tau_m = 0.05$. We report the results in \cref{tab:lambda}. Performance increases with $\lambda$ from $0$ to $0.5$ after which it starts decreasing. The best $\lambda$ is inside $[0.4, 0.5]$ confirming that balancing the contrastive and relational aspects provides better representation. In next experiments, we keep $\lambda = 0.5$.

We performed a small study of the optimization of \cref{eq:10} by removing $L_{ceil}$ ($\eta = 0$) to validate the relevance of our approach for $\tau = 0.1$ and $\tau_m\in\{0.05, 0.07\}$. The results are reported in \cref{tab:loss}. Adding the term $L_{ceil}$ consistently improves performance, empirically proving that our approach is better than simply adding $L_{InfoNCE}$ and $L_{ReSSL}$. This performance boost varies with temperature parameters and our best setting improves by $+0.9$ percentage points (p.p.) in comparison with adding the two losses.

\begin{table}
    \centering
    \begin{tabular}{cccc}
    \toprule 
    Online aug & Teacher aug & Sym & top-1 \\ \midrule
    \emph{strong}     & \emph{weak}        & no  & 82.9 \\
    \emph{strong-$\gamma$} & \emph{weak}   & no  & \textbf{83.0} \\
    \emph{weak}       & \emph{strong}      & no  & 73.4 \\
    \emph{strong}     & \emph{strong}      & no  & 80.5 \\
    \emph{strong-$\alpha$} & \emph{strong-$\beta$} & no  & 80.7 \\ \midrule
    \emph{strong}     & \emph{weak}        & yes & 83.7 \\
    \emph{strong}     & \emph{strong}      & yes & 83.0 \\
    \emph{strong-$\alpha$} & \emph{strong-$\beta$} & yes & \textbf{84.2} \\ 
    \bottomrule
    \end{tabular}
    \caption{Effect of using different distributions of data augmentations for the two views and of the loss symmetrization on the Top-1 accuracy on ImageNet100. Using a \emph{weak} view for the teacher without symmetry is necessary to obtain good relations. With loss symmetry, asymmetric data augmentations improve the results, with the best obtained using strong-$\alpha$ and  \emph{strong-$\beta$}.}
    \label{tab:aug}
\end{table}

\textbf{Asymmetric data augmentations to build the similarity distributions.}
Contrastive learning approaches use strong data augmentations \citep{Chen2020b} to learn view invariant features and prevent the model to collapse. However, these strong data augmentations shift the distribution of similarities among instances that SCE uses to approximate $w_i^*$ in \cref{eq:7}. We need to carefully tune the data augmentations to estimate a relevant target similarity distribution. We listed different distributions of data augmentations in \cref{tab:augmentations}. The \emph{weak} and \emph{strong} augmentations are the same as described by ReSSL \citep{Zheng2021}. \emph{strong-$\alpha$} and \emph{strong-$\beta$} have been proposed by BYOL \citep{Grill2020}. \emph{strong-$\gamma$} combines \emph{strong-$\alpha$} and \emph{strong-$\beta$}.

We performed a study in \cref{tab:aug} on which data augmentations are needed to build a proper target distribution for the non-symmetric and symmetric settings. We report the Top-1 accuracy on Imagenet100 when varying the data augmentations applied on the online and target branches of our pipeline. For the non-symmetric setting, SCE requires the target distribution to be built from a \emph{weak} augmentation distribution that maintains consistency across instances.

\begin{table}
  \centering
    \begin{tabular}{cc|cc}
    \toprule 
    \multicolumn{2}{c|}{$\tau = 0.1$} & \multicolumn{2}{c}{$\tau = 0.2$} \\
    $\tau_m$ & Top-1 & $\tau_m$ & Top-1 \\ \midrule
    0.03 & 82.3 & 0.03 & \textbf{81.3} \\ 
    0.04 & 82.5 & 0.04 & 81.2 \\
    0.05 & 82.9 & 0.05 & 81.2\\
    0.06 & 82.5 & 0.06 & 81.2 \\ 
    0.07 & \textbf{83.4} & 0.07 & 81.1 \\ 
    0.08 & 82.7 & 0.08 & 80.9 \\ 
    0.09 & 82.5 & 0.09 & 81.2\\ 
    0.10 & 82.1 & 0.10 & 81.2 \\ \bottomrule
    \end{tabular}
    \caption{Effect of varying the temperature parameters $\tau_m$ and $\tau$ on the Top-1 accuracy on ImageNet100. $\tau_m$ is lower than $\tau$ to produce a sharper target distribution without noisy relations. SCE does not collapse when $\tau_m \rightarrow \tau$.}
    \label{tab:temperature}
\end{table}

\begin{table*}
\centering
\small
    \begin{tabular}{lccccccc}
    \toprule 
    Method & ImageNet & ImageNet100 & Cifar10 & Cifar100 & STL10 & Tiny-IN \\ \midrule
    MoCov2 \citep{Chen2020a}  & 67.5 & -  & - & - & - & - \\
    MoCov2 [*]  & 68.8 & 80.5  & 87.6 & 61.0 & 86.5 & 45.9 \\
    ReSSL \citep{Zheng2021} & 69.9 & - & 90.2  & 63.8 & 88.3 & 46.6 \\ 
    ReSSL [*] & 70.2 & 81.6 & 90.2  & 64.0 & 89.1 & 49.5 \\ 
    \textbf{SCE (Ours)}  & \textbf{70.5} & \textbf{83.4} & \textbf{90.3} & \textbf{65.5} & \textbf{89.9} & \textbf{51.9} \\ \bottomrule
    \end{tabular}
    \caption{Comparison of SCE with its baselines MoCov2 \citep{Chen2020a} and ReSSL \citep{Zheng2021} on the Top-1 Accuracy on various datasets. SCE outperforms on all benchmarks its baselines. [*] denotes our reproduction.}
    \label{tab:baselines}
\end{table*}

Once the loss is symmetrized, asymmetry with strong data augmentations has better performance. Indeed, using \emph{strong-$\alpha$} and \emph{strong-$\beta$} augmentations is better than using \emph{weak} and \emph{strong} augmentations, and same \emph{strong} augmentations has lower performance. We argue symmetrized SCE requires asymmetric data augmentations to produce different relations for each view to make the model learn more information. The effect of using stronger augmentations is balanced by averaging the results on both views. Symmetrizing the loss boosts the performance as for \citep{Grill2020, Chen2021a}.

\textbf{Sharpening the similarity distributions.} 
The temperature parameters sharpen the distributions of similarity exponentially. SCE uses the temperatures $\tau_m$ and $\tau$ for the target and online similarity distributions with $\tau_m < \tau$ to guide the online encoder with a sharper target distribution. We made a temperature search on ImageNet100 by varying $\tau$ in $\{0.1, 0.2\}$ and $\tau_m$ in $\{0.03, ..., 0.10\}$. The results are in \cref{tab:temperature}. We found the best values $\tau_m = 0.07$ and $\tau = 0.1$ proving SCE needs a sharper target distribution. In Appendix E, this parameter search is done for other datasets used in comparison with our baselines. Unlike ReSSL \citep{Zheng2021}, SCE does not collapse when $\tau_m \rightarrow \tau$ thanks to the contrastive aspect. Hence, it is less sensitive to the temperature choice.

\subsubsection{Comparison with our baselines}\label{Baselines}

We compared on 6 datasets how SCE performs against its baselines. We keep similar implementation details to ReSSL \citep{Zheng2021} and MoCov2 \citep{Chen2020a} for fair comparison.

\textbf{Small datasets.} Cifar10 and Cifar100 \citep{Krizhevsky2009} have 50K training images, 10K test images, $32 \times 32$ resolution and 10-100 classes respectively. \textbf{Medium datasets.} STL10 \citep{Coates2011} has a $96 \times 96$ resolution, 10 classes, 100K unlabeled data, 5k labeled training images and 8K test images. Tiny-Imagenet \citep{Abai2020} is a subset of ImageNet with $64 \times 64$ resolution, 200 classes, 100k training images and 10K validation images.

\textbf{Implementation details.} Architecture implementation details can be found in Appendix D.1. For MoCov2, we use $\tau = 0.2$ and for ReSSL their best $\tau$ and $\tau_m$ reported \citep{Zheng2021}. For SCE, we use the best temperature parameters from \cref{Ablative study} for ImageNet and ImageNet100 and from Appendix E for the other datasets. The same architecture for all methods is used except for MoCov2 on ImageNet that kept the ImageNet100 projector to improve results.

Results are reported in \cref{tab:baselines}. Our baselines reproduction is validated as results are better than those reported by the authors. SCE outperforms its baselines on all datasets proving that our method is more efficient to learn discriminating features on the pretrained dataset. We observe that our approach outperforms more significantly ReSSL on smaller datasets than ImageNet, suggesting that it is more important to learn to discriminate among instances for these datasets. SCE has promising applications to domains with few data such as in medical applications. 

\begin{table*}
    \centering
    \small
    \begin{tabular}{lcccc}
    \toprule
        Method     & 100  & 200  & 300  & 800-1000 \\ \midrule
        SimCLR \citep{Chen2020b}    & 66.5 & 68.3 & -     & 70.4 \\
        MoCov2 \citep{Chen2021a}     & 67.4 & 69.9 & -     & 72.2 \\
        SwaV \citep{Caron2020}      & 66.5 & 69.1 & -     & 71.8 \\
        BYOL \citep{Grill2020}      & 66.5 & 70.6 & 72.5  & 74.3 \\
        Barlow-Twins\citep{Zbontar2021} & - & - & 71.4 & 73.2 \\
        AdCo \citep{Hu2021}         & -    & 68.6  & - & 72.8 \\
        ReSSL \citep{Zheng2021}     & -    & 71.4 & -     & -    \\
        WCL \citep{Zheng2021b}       & 68.1 & 70.3 & -     & 72.2 \\
        VICReg \citep{Bardes2022}  & - & - & - & 73.2 \\
        UniGrad \citep{Tao2022}   & \underline{70.3} & -    & -     & -    \\
        MoCov3 \citep{Chen2021b}    & 68.9 & -    & \underline{72.8}  & 74.6 \\
        NNCLR \citep{Dwibedi2021}     & 69.4 & 70.7 & -     & 75.4 \\
        Triplet \citep{Wang2021a} & - & \textbf{73.8} & - & \textbf{75.9} \\ \midrule
        \textbf{SCE (Ours)} & \textbf{72.1} & \underline{72.7} & \textbf{73.3}  & 74.1 \\ \bottomrule
    \end{tabular}
    \caption{State-of-the-art results on the Top-1 Accuracy on ImageNet under the linear evaluation protocol at different pretraining epochs: 100, 200, 300, 800+. SCE is Top-1 at 100 epochs and Top-2 for 200 and 300 epochs. For 800+ epochs, SCE has lower performance than several state-of the-art methods. Results style: \textbf{best}, \underline{second best}.}
    \label{tab:sota}
\end{table*}

\begin{table}
\centering
\small
    \begin{tabular}{lccc}
    \toprule 
    Method      & Epochs & Top-1 \\ \midrule
    UniGrad \citep{Tao2022} & 100 & 72.3 \\
    SwaV \citep{Caron2020}       & 200    & 72.7 \\
    AdCo \citep{Hu2021} & 200 & 73.2 \\
    WCL \citep{Zheng2021b}        & 200    & 73.3 \\
    Triplet \citep{Wang2021a} & 200    & 74.1 \\
    ReSSL \citep{Zheng2021}      & 200    & 74.7 \\
    WCL \citep{Zheng2021b}         & 800    & 74.7 \\
    SwaV \citep{Caron2020}       & 800    & 75.3 \\
    DINO \citep{Caron2021}        & 800    & 75.3 \\
    UniGrad \citep{Tao2022} & 800 & 75.5 \\
    NNCLR \citep{Dwibedi2021}      & 1000   & 75.6 \\
    AdCo \citep{Hu2021} & 800 & 75.7 \\ \midrule
    \textbf{SCE (ours)}  & 200    & 75.4 \\ \bottomrule
    \end{tabular}
\caption{State-of-the-art results on the Top-1 Accuracy on ImageNet under the linear evaluation protocol with multi-crop. SCE is competitive with the best state-of-the-art methods by pretraining for only 200 epochs instead of 800+.}
\label{tab:sota-multi-crop}
\end{table}

\subsubsection{ImageNet Linear Evaluation}\label{ImageNet Linear Protocol}

We compare SCE on the widely used ImageNet linear evaluation protocol with the state of the art. We scaled our method using a larger batch size and a predictor to match state-of-the-art results \citep{Grill2020, Chen2021b}.

\textbf{Implementation details.} We use the ResNet-50 \citep{He2016} encoder, apply \emph{strong-$\alpha$} and \emph{strong-$\beta$} augmentations defined in \cref{tab:augmentations}. We follow the same training hyperparameters used by \citep{Chen2021b} and detailed in Appendix D.2. The loss is symmetrized and we keep the best hyperparameters from \cref{Ablative study}: $\lambda = 0.5$, $\tau = 0.1$ and $\tau_m = 0.07$.

\textbf{Multi-crop setting.} We follow \citep{Hu2021} setting and sample 6 different views detailed in Appendix D.2.

\textbf{Evaluation protocol.} We follow the protocol defined by \citep{Chen2021b} and detailed in Appendix D.2.

We evaluated SCE at epochs 100, 200, 300 and 1000 on the Top-1 accuracy on ImageNet to study the efficiency of our approach and compare it with the state of the art in \cref{tab:sota}. At 100 epochs, SCE reaches $\mathbf{72.1\%}$ up to $\mathbf{74.1\%}$ at 1000 epochs. Hence, SCE has a fast convergence and few epochs of training already provides a good representation. SCE is the Top-1 method at 100 epochs and Top-2 for 200 and 300 epochs proving the good quality of its representation for few epochs of pretraining. 

At 1000 epochs, SCE is below several state-of-the art results. We argue that SCE suffers from maintaining a $\lambda$ coefficient to $0.5$ and that relational or contrastive aspects do not have the same impact at the beginning and at the end of pretraining. A potential improvement would be using a scheduler on $\lambda$ that varies over time.

We added multi-crop to SCE for 200 epochs of pretraining. It enhances the results but it is costly in terms of time and memory. It improves the results from $72.7$\% to our best result $\mathbf{75.4\%}$ ($\mathbf{+2.7}$\textbf{p.p.}). Therefore, SCE learns from having local views and they should maintain relations to learn better representations. We compared SCE with state-of-the-art methods using multi-crop in \cref{tab:sota-multi-crop}. SCE is competitive with top state-of-the-art methods that trained for 800+ epochs by having slightly lower accuracy than the best method using multi-crop ($-0.3$p.p) and without multi-crop ($-0.5$p.p). SCE is more efficient than other methods, as it reaches state-of-the-art results for fewer pretraining epochs.

\begin{table*}
    \footnotesize
    \centering
    \begin{tabular}{lccccccccccc|c} 
    \toprule
    Method   & Food	& CIFAR10 &	CIFAR100 &	SUN & Cars & Air. &	VOC &	DTD &	Pets &	Caltech &	Flow. & Avg.           \\ \midrule
    SimCLR    & 72.8 &	90.5 & 74.4 & 60.6 & 49.3 &	49.8 & 81.4 & 75.7 & 84.6 &	89.3 & 92.6 & 74.6 \\
    BYOL      & 75.3 & 91.3 & 78.4 & 62.2 & \textbf{67.8} & 60.6 & 82.5 & 75.5 & 90.4 &	94.2 & \textbf{96.1} & 79.5 \\
    NNCLR     & 76.7 & 93.7 & 79.0 & 62.5 & 67.1 &	\textbf{64.1} & 83.0 & 75.5 & \textbf{91.8} &	91.3 & 95.1 & 80 \\
    \textbf{SCE (Ours)} & \textbf{77.7} & \textbf{94.8} & \textbf{80.4} & \textbf{65.3} & 65.7 &	59.6 & \textbf{84.0} & \textbf{77.1} & 90.9 & 92.7 & \textbf{96.1} & \textbf{80.4} \\ \midrule
    Supervised & 72.3 & 93.6 & 78.3 & 61.9 & 66.7 &	61.0 & 82.8 & 74.9 & 91.5 &	\textbf{94.5} & 94.7 & 79.3 \\ \bottomrule
    \end{tabular}
    \caption{Linear classifier trained on popular many-shot recognition datasets in comparison with SimCLR \citep{Chen2020b}, BYOL \citep{Grill2020}, NNCLR \citep{Dwibedi2021} and supervised training. SCE is Top-1 on 7 datasets and in average.}
    \label{tab:many-shot}
\end{table*}

\begin{table*}
\centering
\footnotesize
    \begin{tabular}{lcccc} \toprule
    Method & K = 16 & K = 32 & K = 64 & full \\\midrule
    MoCov2 \citep{Chen2020a} & 76.1 & 79.2 & 81.5 & 84.6 \\
    PCLv2 \citep{Li2021} & 78.3 & 80.7 & 82.7 & 85.4 \\
    ReSSL \citep{Zheng2021} & 79.2 & 82.0 & 83.8 & 86.3 \\
    SwAV \citep{Caron2020} & 78.4 & 81.9 & 84.4 & 87.5 \\
    WCL \citep{Zheng2021b} & \textbf{80.2} & 83.0 & 85.0 & 87.8 \\
    \textbf{SCE (Ours)} & 79.5 & \textbf{83.1} & \textbf{85.5} & \textbf{88.2} \\ \bottomrule
    \end{tabular}
    \caption{Transfer learning on low-shot image classification on Pascal VOC2007. All methods have been pretrained for 200 epochs. SCE is Top-1 when using 32-64-all images per class and Top-2 for 16 images.}
    \label{tab:low-shot-voc}
\end{table*}

\begin{table}
\centering
\footnotesize
    \begin{tabular}{lcc}
    \toprule
    Method            & $AP^{\emph{Box}}$   & $AP^{\emph{Mask}}$ \\ \midrule
    Random            & 35.6             & 31.4 \\
    Rel-Loc \citep{Doersch2015}     & 40.0             & 35.0 \\
    Rot-Pred \citep{Gidaris2018}    & 40.0             & 34.9 \\
    NPID \citep{Wu2018}              & 39.4             & 34.5 \\
    MoCo \citep{He2020}              & 40.9             & 35.5 \\
    MoCov2 \citep{Chen2020a}           & 40.9             & 35.5 \\
    SimCLR \citep{Chen2020b}           & 39.6             & 34.6 \\
    BYOL \citep{Grill2020}             & 40.3             & 35.1 \\
    \textbf{SCE (Ours)} & \underline{41.6} & \underline{36.0} \\
    Triplet \citep{Wang2021a} & \textbf{41.7}    & \textbf{36.2} \\ \midrule
    Supervised        & 40.0             & 34.7 \\ \bottomrule
    \end{tabular}
    \caption{Object detection and Instance Segmentation on COCO \citep{Lin2014} training a Mask R-CNN \citep{He2017}. SCE is Top-2 on both tasks, slightly below Truncated-Triplet \citep{Wang2021a} and better than supervised training. Results style: \textbf{best}, \underline{second best}.}
    \label{tab:object-detection}
\end{table}

\subsubsection{Transfer Learning}
We study the generalization of our proposed SCE on several tasks using our multi-crop checkpoint pretrained for 200 epochs on ImageNet.

\textbf{Low-shot evaluation.} Low-shot transferability of our backbone is evaluated on Pascal VOC2007. We followed the protocol proposed by \citet{Zheng2021}. We select 16, 32, 64 or all images per class to train the classifier. Our results are compared with other state-of-the-art methods pretrained for 200 epochs in \cref{tab:low-shot-voc}. SCE is Top-1 for 32, 64 and all images per class and Top-2 for 16 images per class, proving the generalization of our approach to few-shot learning.

\textbf{Linear classifier for many-shot recognition datasets.} We follow the same protocol as \citet{Grill2020, Ericsson2021} to study many-shot recognition in transfer learning on the datasets FGVC Aircraft \citep{Maji2013}, Caltech-101 \citep{Fei2004}, Standford Cars \citep{Krause2013}, CIFAR-10 \citep{Krizhevsky2009}, CIFAR-100 \citep{Krizhevsky2009}, DTD \citep{Cimpoi2014}, Oxford 102 Flowers \citep{Nilsback2008}, Food-101 \citep{Bossard2014}, Oxford-IIIT Pets \citep{Parkhi2012}, SUN397 \citep{Xiao2010} and Pascal VOC2007 \citep{Everingham2010}. These datasets cover a large variety of number of training images (2k-75k) and number of classes (10-397). We report the Top-1 classification accuracy except for Aircraft, Caltech-101, Pets and Flowers for which we report the mean per-class accuracy and the 11-point MAP for VOC2007.

We report the performance of SCE in comparison with state-of-the-art methods in \cref{tab:many-shot}. SCE outperforms on 7 datasets all approaches. In average, SCE is above all state-of-the-art methods as well as the supervised baseline, meaning SCE is able to generalize to a wide range of datasets.

\textbf{Object detection and instance segmentation.} 
We performed object detection and instance segmentation on the COCO dataset \citep{Lin2014}. We used the pretrained network to initialize a Mask R-CNN \citep{He2017} up to the C4 layer. We follow the protocol of \citet{Wang2021a} and report the Average Precision for detection $AP^{Box}$ and instance segmentation $AP^{Mask}$.

We report our results in \cref{tab:object-detection} and observe that SCE is the second best method after Truncated-Triplet \citep{Wang2021a} on both metrics, by being slightly below their reported results and above the supervised setting. Therefore our proposed SCE is able to generalize to object detection and instance segmentation task beyond what the supervised pretraining can ($\mathbf{+1.6}$\textbf{p.p.} of $AP^{Box}$ and $\mathbf{+1.3}$\textbf{p.p.} of $AP^{Mask})$. 

\subsection{Video study}\label{Video study}

In this section, we first make an ablation study of our approach SCE to find the best hyperparameters on videos. Then, we compare SCE to the state of the art after pretraining on Kinetics400 and assess generalization on various tasks.

\subsubsection{Ablation study}\label{Video Ablative study}

\textbf{Pretraining Dataset.} To make the ablation study, we perform pretraining experiments on Mini-Kinetics200 \citep{Xie2018}, later called Kinetics200 for simplicity. It is a subset of Kinetics400 \citep{Kay2017} meaning they have a close distribution with less resources required on Kinetics200 to train. Kinetics400 is composed of 216k videos for training and 18k for validation for 400 action classes. However, it has been created from Youtube and some videos have been deleted. We use the dataset hosted\footnote{Link to the Kinetics400 dataset hosted by the CVD foundation: \url{https://github.com/cvdfoundation/kinetics-dataset}.} by the CVD foundation.

\textbf{Evaluation Datasets.} To study the quality of our pretrained representation, we perform linear evaluation classification on the Kinetics200 dataset. Also, we finetune on the first split of the UCF101 \citep{Soomro2012} and HMDB51 \citep{Kuehne2011} datasets. UCF101 is an action classification dataset that contains 13k3 different videos for 101 classes and has 3 different training and validation splits. HMDB51 is also an action classification dataset that contains 6k7 different videos from 51 classes with 3 different splits.

\textbf{Pretraining implementation details.} We used the ResNet3D-18 network \citep{Hara2018} following the Slow path of \citet{Feichtenhofer2019}. We kept hyperparameters close to the ones used for ImageNet in \cref{ImageNet Linear Protocol}. More details can be found in Appendix D.3. We pretrain for 200 epochs with a batch size of $512$. The loss is symmetrized. To form two different views from a video, we follow \citet{Feichtenhofer2021} and randomly sample two clips from the video that lasts $2.56$ seconds and keep only 8 frames.

\textbf{Linear evaluation and finetuning evaluation protocols.} We follow \citet{Feichtenhofer2021} and details can be found in Appendix D.3. For finetuning on UCF101 and HMDB51 we only use the first split in ablation study.
    
\begin{table}
    \centering
    \begin{tabular}{lccc}\toprule
        Method & K200 & UCF101 & HMDB51 \\ \midrule
        SCE Baseline & $63.9$ & $86.3$ & $57.0$ \\
        Supervised & $\mathbf{72.0}$ & $\mathbf{87.5}$ & $\mathbf{60.1}$ \\\bottomrule
    \end{tabular}
    \caption{Comparison of our baseline and supervised training on the Kinetics200, UCF101 and HMDB51 Top-1 accuracy. Supervised training is consistently better.}
    \label{tab:video-baseline}
\end{table}

\textbf{Baseline and supervised learning}. We define an SCE baseline which uses the hyperparameters $\lambda=0.5$, $\tau=0.1$, $\tau_m=0.07$. We provide performance of our SCE baseline as well as supervised training in \cref{tab:video-baseline}. We observe that our baseline has lower results than supervised learning with $-8.1$p.p for Kinetics200, $-1.2$p.p for UCF101 and $-3.1$p.p for HMDB51 which shows that our representation has a large margin for improvement. 

\begin{table}
    \centering
    \begin{tabular}{cccc}\toprule
        $\lambda$ & K200 & UCF101 & HMDB51 \\ \midrule
        $0.000$  & $64.2$ & $86.2$ & $57.5$  \\
        $0.125$  & $\mathbf{64.8}$ & $\mathbf{86.9}$ & $\mathbf{58.2}$  \\
        $0.250$  & $64.3$ & $86.7$ & $\mathbf{58.2}$ \\
        $0.375$  & $64.7$ & $86.3$ & $56.8$ \\
        $0.500$  & $63.9$ & $86.3$ & $57.0$ \\
        $0.625$  & $63.4$ & $86.2$ & $55.7$ \\
        $0.750$  & $63.1$ & $85.8$ & $56.2$ \\
        $0.875$  & $62.1$ & $85.7$ & $55.3$ \\
        $1.000$  & $61.9$ & $85.0$ & $55.4$ \\
    \bottomrule
    \end{tabular}
    \label{tab:video-lambda-vary}
    \caption{Effect of varying $\lambda$ on the Kinetics200, UCF101 and HMDB51 Top-1 accuracy. The best $\lambda$ is $0.125$ meaning contrastive and relational leverage increases performance.}
\end{table}

\textbf{Leveraging contrastive and relational learning.} As for the image study, we varied $\lambda$ from the equation \cref{eq:7} in the set $\{0, 0.125, ..., 0.875, 1\}$ to observe the effect of leveraging the relational and contrastive aspects and report results in \cref{tab:video-lambda-vary}. Using relations during pretraining improves the results rather than only optimizing a contrastive learning objective. The performance on Kinetics200, UCF101 and HMDB51 consistently increases by decreasing $\lambda$ from $1$ to $0.25$. The best $\lambda$ obtained is $0.125$. Moreover $\lambda=0$ performs better than $\lambda=1$. These results suggest that for video pretraining with standard image contrastive learning augmentations, relational learning performs better than contrastive learning and leveraging both further improve the quality of the representation.

\begin{table}
    \centering
    \begin{tabular}{cccc}\toprule
        $\tau_m$ & K200 & UCF101 & HMDB51 \\ \midrule
        $0.03$ & $63.4$ & $86.1$ & $56.9$ \\
        $0.04$ & $63.8$ & $\mathbf{86.6}$ & $56.6$ \\ 
        $0.05$ & $\mathbf{64.3}$ & $86.4$ & $\mathbf{57.1}$ \\
        $0.06$ & $64.1$ & $86.2$ & $56.4$ \\
        $0.07$ & $63.9$ & $86.3$ & $57.0$ \\
        $0.08$ & $63.8$ & $85.9$ & $55.8$ \\
    \bottomrule
    \end{tabular}
    \label{tab:video-temp}
    \caption{Effect of varying $\tau_m$ on the Top-1 accuracy on Kinetics200, UCF101 and HMDB51 while maintaining $\tau=0.1$. The best $\tau_m$ is $0.05$ meaning that a sharper target distribution is required.}
\end{table}
        
\textbf{Target temperature variation.} We studied the effect of varying the target temperature with values in the set $\tau_m \in \{0.03, 0.04, ..., 0.08\}$ while maintaining the online temperature $\tau = 0.1$. We report results in \cref{tab:video-temp}. We observe that the best temperature is $\tau_m=0.05$ indicating that a sharper target distribution is required for video pretraining. We also observe that varying $\tau_m$ has a lower impact on performance than varying $\lambda$. 

\begin{table}
    \centering
    \begin{tabular}[]{cccc}\toprule
        strength & K200 & UCF101 & HMDB51 \\ \midrule
        0.50 & $63.9$ & $86.3$ & $57.0$ \\
        0.75 & $64.6$ & $86.8$ & $57.8$ \\
        1.00 & $\mathbf{64.8}$ & $\mathbf{87.0}$ & $\mathbf{58.1}$ \\ \bottomrule
    \end{tabular}
    \caption{Effect of strength for color jittering for \emph{strong-$\alpha$} and \emph{strong-$\beta$} augmentations on the Kinetics200, UCF101 and HMDB51 Top-1 accuracy. Strong color jittering improves performance.}
    \label{tab:video-color}
\end{table}

\begin{table}
    \centering
    \begin{tabular}[]{cccccc}\toprule
        jitter & reverse & diff & K200 & UCF101 & HMDB51 \\ \midrule
        0.0 & 0.0 & 0.0 & $63.9$ & $86.3$ & $57.0$ \\
        0.2 & 0.0 & 0.0 & $64.2$ & $86.4$ & $56.9$ \\
        0.0 & 0.2 & 0.0 & $64.0$ & $85.7$ & $55.4$ \\
        0.0 & 0.0 & 0.2 & $65.4$ & $\mathbf{88.3}$ & $\mathbf{61.4}$ \\
        0.0 & 0.0 & 0.5 & $64.1$ & $87.7$ & $60.8$ \\ \midrule
        \multicolumn{3}{l}{Supervised} & $\mathbf{72.0}$ & $87.5$ & $60.1$ \\\bottomrule
    \end{tabular}
    \caption{Effect of using the temporal augmentations by applying clip duration jittering \emph{jitter}, randomly reversing the order of frames \emph{reverse} or randomly using RGB difference \emph{diff} on the Kinetics200, UCF101 and HMDB51 Top-1 accuracy. The \emph{diff} augmentation consistently improves results on the three benchmarks and outperforms supervised pretraining. The other augmentations unchange or decrease performance in average.}
    \label{tab:video-time}
\end{table}

\textbf{Spatial and temporal augmentations.} We tested varying and adding some data augmentations that generates the pairs of views. As we are dealing with videos, these augmentations can be either spatial or temporal. We define the \emph{jitter} augmentation that jitters by a factor the duration of a clip, \emph{reverse} that randomly reverses the order of frames and \emph{diff} that randomly applies RGB difference on the frames. RGB difference consists in converting the frames to grayscale and subtracting them over time to approximate the magnitude of optical flow. In this work, we consider RGB difference as a data augmentation that is randomly applied during pretraining. In the literature it is often used as a modality to provide better representation quality than RGB frames \citep{Jing2018, Lorre2020, Duan2022}. Here, we only apply it during pretraining as a random augmentation. Evaluation only sees RGB frames.

We tested to increase the color jittering strength in \cref{tab:video-color}. Using a strength of $1.0$ improved our performance on all the benchmarks suggesting that video pretraining requires harder spatial augmentations than images.

\begin{table*}
    \centering
    \small
    \begin{tabular}[]{ccccccc}\toprule
        $\lambda$ & $\tau_m$ & diff & strength & K200 & UCF101 & HMDB51 \\ \midrule
        0.125 & 0.05 & 0.2 & 1.0 & $65.0$ & $87.4$ & $\underline{61.1}$ \\ 
        0.125 & 0.07 & 0.2 & 1.0 & $64.7$ & $88.2$ & $60.6$ \\
        0.500 & 0.05 & 0.2 & 1.0 & $\underline{66.0}$ & $\underline{88.4}$ & $\mathbf{62.0}$ \\
        0.500 & 0.07 & 0.2 & 1.0 & $65.4$ & $\mathbf{88.6}$ &$61.0$ \\ \midrule
        \multicolumn{4}{l}{SCE Baseline} & $63.9$ & $86.3$ & $57.0$ \\
        \multicolumn{4}{l}{Supervised} & $\mathbf{72.0}$ & $87.5$ & $60.1$ \\\bottomrule
    \end{tabular}
    \caption{Effect of combining best hyper-parameters found in the ablation study which are $\lambda=0.125$, $\tau_m=0.05$, \emph{color strength}$=1.0$ and adding randomly time difference  on the Kinetics200, UCF101 and HMDB51 Top-1 accuracy. Using time difference and stronger color jittering increases the optimal $\lambda$ value which indicates contrastive learning is efficient to deal with harder views and helps relational learning. The best value $\tau_m=0.05$ performs favorably for Kinetics200 and HMDB51. Results style: \textbf{best}, \underline{second best}.}
    \label{tab:video-best}
\end{table*}

We tested our defined temporal augmentations with \emph{jitter} of factor $0.2$, meaning sampling clips between $0.80\times 2.56$ and $1.20 \times 2.56$ seconds, randomly applying $reverse$ with $0.2$ probability and randomly applying \emph{diff} with $0.2$ or $0.5$ probability. We report results in \cref{tab:video-time}. Varying the clip duration had no noticeable impact on our benchmarks, but reversing the order of frames decreased the performance on UCF101 and HMDB51. This can be explained by the fact that this augmentation can prevent the model to correctly represent the arrow of time. Finally, applying \emph{diff} with $0.2$ probability considerably improved our performance over our baseline with $\mathbf{+1.5}$\textbf{p.p.} on Kinetics200, $\mathbf{+2.0}$\textbf{p.p.} on UCF101 and $\mathbf{+4.4}$\textbf{p.p.} on HMDB51. It outperforms supervised learning for generalization with $\mathbf{+0.8}$\textbf{p.p.} on UCF101 and $\mathbf{+1.3}$\textbf{p.p.} on HMDB51. Applying more often \emph{diff} decreases performance. These results show that SCE benefits from using views that are more biased towards motion than appearance. We believe that it is particularly efficient to model relations based on motion.

\textbf{Bringing all together.} We studied varying one hyperparameter from our baseline and how it affects performance. In this final study, we combined our baseline with the different best hyperparameters found which are $\lambda=0.125$, $\tau_m=0.05$, color strength $=1.0$ and applying \emph{diff} with $0.2$ probability. We report results in \cref{tab:video-best} and found out that using harder augmentations increased the optimal $\lambda$ value as using $\lambda=0.5$ performs better than $\lambda=0.125$. This indicates that relational learning by itself cannot learn a better representation through positive views that share less mutual information. The contrastive aspect of our approach is proven efficient for such harder positives. We take as best configuration $\lambda=0.5$, $\tau_m=0.05$, \emph{diff} applied with probability $0.2$ and color strength $=1.0$ as it provides best or second best results for all our benchmarks. It improves our baseline by $\mathbf{+2.1}$\textbf{p.p.} on Kinetics200 and UCF101, and $\mathbf{+5.0}$\textbf{p.p.} on HMDB51. It outperforms our supervised baseline by $\mathbf{+0.9}$\textbf{p.p.} on UCF101 and $\mathbf{+1.9}$\textbf{p.p.} on HMDB51.

\subsubsection{Comparison with the State of the Art}\label{Video SOTA}

\textbf{Pretraining dataset.} To compare SCE with the state of the art, we perform pretraining on Kinetics400 \citep{Kay2017} introduced in \cref{Video Ablative study}.

\textbf{Evaluation datasets.} UCF101 \citep{Soomro2012} and HMDB51 \citep{Kuehne2011} have been introduced in \cref{Video Ablative study}.

AVA (v2.2) \citep{Gu2018} is a dataset used for spatiotemporal localization of humans actions composed of 211k training videos and 57k validation videos for 60 different classes. Bounding box annotations are used as targets and we report the mean Average Precision (mAP) for evaluation.

Something-Something V2 (SSv2) \citep{Goyal2017} is a dataset composed of human-object interactions for 174 different classes. It contains 169k training and 25k validation videos.

\textbf{Pretraining implementation details.} We use the ResNet3D-18 and ResNet3D-50 network \citep{Hara2018} and more specifically the slow path of \citet{Feichtenhofer2019}. We kept the best hyperparameters from \cref{Video Ablative study} which are $\lambda=0.5$, $\tau_m=0.05$, RGB difference with probability of $0.2$, and color strength $=1.0$ on top of the $strong-\alpha$ and $strong-\beta$ augmentations. From the randomly sampled clips we specify if we keep 8 or 16 frames.

\begin{table*}
    \centering
    \footnotesize
    \begin{tabular}{l!{\vrule width -1pt}c!{\vrule width -1pt}c!{\vrule width -1pt}c!{\vrule width -1pt}c!{\vrule width -1pt}c!{\vrule width -1pt}c!{\vrule width -1pt}c!{\vrule width -1pt}c!{\vrule width -1pt}c}\toprule
        Method                 & T$_p$ & Res$_p$ & T$_e$ & Res$_e$ & Modality & Pretrain & K400            & UCF101          & HMDB51          \\ \midrule
        \multicolumn{10}{l}{\textbf{Backbone: S3D}} \\ \midrule
        CoCLR \citep{Han2020a}         & $32$  & $128^2$ & $32$  & $128^2$ & R        & K400     & -               & $87.9$          & $54.6$          \\
        \rowcolor{light-gray}
        CoCLR \citep{Han2020a}        & $32$  & $128^2$ & $32$  & $128^2$ & R+F      & K400     & -               & $90.6$          & $62.9$          \\ \midrule
        \multicolumn{10}{l}{\textbf{Backbone: S3D-G}} \\ \midrule
        SpeedNet \citep{Benaim2020} & $64$  & -       & $16$  & $224^2$ & R        & K400     & -               & $81.1$          & $48.8$          \\
        TEC \citep{Jenni2021} & $32$  & $128^2$ & $32$     & $128^2$       & R        & K400     & -               & $86.9$          & $63.5$ \\
        $\rho$BYOL ($\rho=4$) \citep{Feichtenhofer2021}    & $32$  & $224^2$ & $32$  & $256^2$ & R        & K400     & -               & $\textbf{96.3}$ & $\textbf{75.0}$ \\ \midrule
        \multicolumn{10}{l}{\textbf{Backbone: R(2+1)D-18}} \\ \midrule
        VideoMoCo \citep{Pan2021} & $32$  & $112^2$ & -     & -       & R        & K400     & -               & $78.7$          & $49.2$          \\ 
        RSPNet \citep{Chen2021}     & $16$                     & $112^2$                & $16$                        & $224^2$      & R        & K400     & -               & $81.1$          & $44.6$          \\
        TransRank \citep{Duan2022} & $16$  & $112^2$ & -     & -       & R        & K200     & -               & $87.8$          & $60.1$          \\
        \rowcolor{light-gray}
        TransRank \citep{Duan2022}  & $16$  & $112^2$ & -     & -       & R+RD     & K200     & -               & $\textbf{90.7}$ & $\textbf{64.2}$ \\
        TEC \citep{Jenni2021} & $16$  & $112^2$ & $16$     & $112^2$       & R        & K400     & -               & $88.2$          & $62.2$ \\ \midrule
        \multicolumn{10}{l}{\textbf{Backbone: ResNet3D-18}} \\ \midrule
        ST-Puzzle \citep{Kim2019} & $16$  & - & $16$     & $112^2$       & R        & K400     & -               & $65.8$          & $33.7$ \\
        3D-RotNet \citep{Jing2018}     & $16$  & $112^2$       & -  & - & R        & K400     & -               & $66.0$          & $37.1$          \\
        \rowcolor{light-gray}
        3D-RotNet \citep{Jing2018}   & $16$  & $112^2$       & -  & - & R+D        & K400     & -               & $76.7$          & $47.0$          \\
        VTHCL \citep{Yang2020}  & $8$   & $224^2$ & $8$   & $224^2$ & R        & K400     & -               & $80.6$          & $48.6$          \\
        TransRank \citep{Duan2022}  & $16$  & $112^2$ & -     & -       & R        & K200     & -               & $85.7$          & $58.1$          \\
        \rowcolor{light-gray}
        TransRank \citep{Duan2022}    & $16$  & $112^2$ & -     & -       & R+RD     & UCF101   & -               & $88.5$          & $63.0$          \\
        \rowcolor{light-gray}
        TransRank \citep{Duan2022}   & $16$  & $112^2$ & -     & -       & R+RD     & K200     & -               & $89.6$          & $63.5$          \\
        TEC \citep{Jenni2021} & $16$  & $128^2$ & $16$     & $128^2$       & R        & K400     & -               & $87.1$          & $63.6$ \\
        ProViCo \citep{Park2022}   & $16$  & $112^2$ & -     & -       & R        & K400     & -               & $87.2$          & $59.4$          \\
        $\rho$MoCo ($\rho=2$) \citep{Feichtenhofer2021} & $8$   & $224^2$ & $8$   & $256^2$ & R        & K400     & $56.2$          & $87.1$          & -               \\
        \rowcolor{light-green}
        \textbf{SCE (Ours)}   & $8$  & $224^2$ & $8$  & $256^2$ & R        & K200     & $-$          & $88.4$          & $62.0$          \\
        \rowcolor{light-green}
        \textbf{SCE (Ours)}   & $16$  & $112^2$ & $16$  & $128^2$ & R        & K400     & $56.6$          & $89.1$          & $\textbf{66.6}$          \\
        \rowcolor{light-green}
        \textbf{SCE (Ours)}  & $8$   & $224^2$ & $8$   & $256^2$ & R        & K400     & $\textbf{59.8}$ & $\textbf{90.9}$ & $65.7$ \\ \midrule
        \multicolumn{10}{l}{\textbf{Backbone: ResNet3D-50}} \\ \midrule
        VTHCL \citep{Yang2020}   & $8$   & $224^2$ & $8$   & $224^2$ & R        & K400     & $-$             & $82.1$          & $49.2$          \\
        CATE \citep{Sun2021}   & $8$   & $224^2$ & $32$   & $256^2$ & R        & K400     & $-$             & $88.4$          & $61.9$          \\
        CVRL \citep{Qian2021}  & $16$  & $224^2$ & $32$  & $256^2$ & R        & K400     & $66.1$          & $92.2$          & $66.7$          \\
        CVRL \citep{Qian2021}    & $16$  & $224^2$ & $32$  & $256^2$ & R        & K600     & $70.4$          & $93.4$          & $68.0$          \\
        \rowcolor{light-gray}
        CORP$_f$ \citep{Hu2021a}   & $16$  & $224^2$ & $32$  & $256^2$ & R+F      & K400     & $66.6$          & $93.5$          & $68.0$          \\
        ConST-CL \citep{Yuan2022}     & $16$  & $224^2$ & $32$  & $256^2$ & R        & K400     & $66.6$          & $94.8$          & $71.9$          \\
        BraVe \citep{Recasens2021}   & $16$  & $224^2$ & $32$  & $224^2$ & R        & K400     & -               & $93.7$          & $72.0$          \\
        \rowcolor{light-gray}
        BraVe \citep{Recasens2021}  & $16$  & $224^2$ & $32$  & $224^2$ & R+F      & K400     & -               & $94.7$          & $72.7$          \\
        BraVe \citep{Recasens2021}    & $16$  & $224^2$ & $32$  & $224^2$ & R        & K600     & -               & $94.1$          & $74.0$          \\
        \rowcolor{light-gray}
        BraVe \citep{Recasens2021}   & $16$  & $224^2$ & $32$  & $224^2$ & R+F      & K600     & -               & $95.1$          & $74.3$          \\
        $\rho$MoCo ($\rho=2$) \citep{Feichtenhofer2021} & $8$   & $224^2$ & $8$   & $256^2$ & R        & K400     & $65.8$          & $91.0$          & -               \\
        $\rho$MoCo ($\rho=2$) \citep{Feichtenhofer2021} & $16$  & $224^2$ & $16$  & $256^2$ & R        & K400     & $67.6$          & $93.3$          & -               \\
        $\rho$BYOL ($\rho=2$) \citep{Feichtenhofer2021}  & $8$   & $224^2$ & $8$   & $256^2$ & R        & K400     & $65.8$          & $92.7$          & -               \\
        $\rho$BYOL ($\rho=4$) \citep{Feichtenhofer2021}  & $8$   & $224^2$ & $8$   & $256^2$ & R        & K400     & $70.0$          & $94.2$          & $72.1$          \\
        $\rho$BYOL ($\rho=4$) \citep{Feichtenhofer2021}   & $8$   & $224^2$ & $16$  & $256^2$ & R        & K400     & $\mathbf{71.5}$ & $\mathbf{95.5}$ & $73.6$          \\
        \rowcolor{light-green}
        \textbf{SCE (Ours)}    & $8$   & $224^2$ & $8$   & $256^2$ & R        & K400     & $67.6$          & $94.1$          & $70.5$          \\
        \rowcolor{light-green}
        \textbf{SCE (Ours)} & $16$  & $224^2$ & $16$  & $256^2$ & R        & K400     & $69.6$          & $95.3$          & $\mathbf{74.7}$ \\
        \bottomrule
    \end{tabular}
    \caption{Performance of SCE for the linear evaluation protocol on Kinetics400 and finetuning on the three splits of UCF101 and HMDB51. \textbf{Res$_p$}, \textbf{Res$_e$} means the resolution for pretraining and evaluation. \textbf{T$_p$}, \textbf{T$_e$} means the number of frames used for pretraining and evaluation. For \textbf{Modality}, ``R" means RGB, ``F" means Optical Flow, ``RD" means RGB difference. Best viewed in color, \colorbox{light-gray}{gray rows} highlight multi-modal trainings and \colorbox{light-green}{green rows} our results. SCE obtains state of the art results on ResNet3D-18 and on the finetuning protocol for ResNet3D-50.}
    \label{tab:video-sota-action}
\end{table*}

\begin{table*}
    \centering
    \footnotesize
    \begin{tabular}{lccccc|ccc|ccc}\toprule
        \multirow{2}{*}{Method} & \multirow{2}{*}{Res$_p$} & \multirow{2}{*}{T$_p$} & \multirow{2}{*}{Res$_e$} & \multirow{2}{*}{T$_e$} & \multirow{2}{*}{Pretrain} & \multicolumn{3}{c|}{UCF101} & \multicolumn{3}{c}{HMDB51}                                                                                                             \\
                                   &                                                    &                        &                          &                        &                           & R@$1$                       & R@$5$                      & R@$10$          & R@$1$           & R@$5$           & R@$10$          \\ \midrule
        \multicolumn{12}{l}{\textbf{Backbone: ResNet3D-18}} \\ \midrule
        MemDPC \citep{Han2020b}     & $40$                     & $224^2$                & $40$                        & $224^2$                      & UCF101                       & $20.2$                      & $40.4$                     & $52.4$                         & $7.7$          & $25.7$      & $40.6$                       \\
        RSPNet \citep{Chen2021}          & $16$                     & $112^2$                & $16$                        & $224^2$                      & K400                       & $41.1$                      & $59.4$                     & $68.4$                 & -          & -      & -                         \\
        MFO \citep{Qian2021a}         & $16$                     & $112^2$                & $16$                        & $112^2$                      & K400                       & $41.5$                      & $60.6$                     & $71.2$                         & $20.7$          & $40.8$      & $55.2$           \\
        TransRank \citep{Duan2022}   & $16$                     & $112^2$                & -                        & -                      & UCF101                       & $46.5$                      & $63.7$                              & -               & $19.4$          & $45.4$          & $59.1$                         \\
        ViCC \citep{Toering2022}      & $16$                     & $128^2$                & $16$                     & $128^2$                & UCF101                       & $50.3$                      & $70.9$                     & $78.7$                    & $22.7$          & $46.2$          & $60.9$                    \\
        TransRank \citep{Duan2022}     & $16$                     & $112^2$                & -                        & -                      & K200                      & $54.0$                      & $71.8$                              & -               & $25.5$          & $52.3$          & $65.8$                         \\
        TCLR \citep{Dave2022}           & $16$                     & $112^2$                & -                        & -                      & UCF101                      & $56.2$                      & $72.2$                     & $79.0$                       & $22.8$          & $45.4$          & $57.8$                        \\ 
        TEC \citep{Jenni2021}                & $16$                     & $128^2$                & $16$                        & $128^2$                      & UCF101                       & $63.6$                      & $79.0$                     & $84.8$                  & $32.2$          & $60.3$          & $71.6$                  \\
        ProViCo \citep{Park2022}           & $16$                     & $112^2$                & -                        & -                      & UCF101                       & $63.8$                      & $75.1$                     & $84.8$                  & $35.9$          & $55.2$          & $74.3$                    \\
        ProViCo \citep{Park2022}           & $16$                     & $112^2$                & -                        & -                      & K400                      & $67.6$                      & $81.4$                     & $90.1$  & $\mathbf{40.1}$ & $60.6$          & $75.2$          \\
        \rowcolor{light-green}
        \textbf{SCE (Ours)}            & $16$                     & $112^2$                & $16$                     & $128^2$                & K400                      & $\mathbf{74.5}$                      & $\mathbf{85.9}$                     & $\mathbf{90.5}$                    & $37.8$          & $62.1$          & $73.8$            \\
        \rowcolor{light-green}
        \textbf{SCE (Ours)}           & $8$                      & $224^2$                & $8$                      & $256^2$                & K400                      & $74.4$             & $85.6$            & $90.0$                   & $\mathbf{40.1}$ & $\mathbf{63.3}$ & $\mathbf{75.4}$      \\ \midrule
        \multicolumn{12}{l}{\textbf{Backbone: ResNet3D-50}} \\ \midrule
        CATE \citep{Sun2021} & $8$                      & $224^2$                & $32$                     & $256^2$                & K400                      & $54.9$                      & $68.3$                     & $75.1$                  & $33.0$          & $56.8$          & $69.4$     \\
        \rowcolor{light-green}
        \textbf{SCE (Ours)}     & $8$                      & $224^2$                & $8$                      & $256^2$                & K400                      & $81.5$                      & $89.7$                     & $92.8$ & $43.0$          & $67.0$          & $79.0$          \\
        \rowcolor{light-green}
        \textbf{SCE (Ours)}         & $16$                     & $224^2$                & $16$                     & $256^2$                & K400                      & $\mathbf{83.9}$             & $\mathbf{92.2}$            & $\mathbf{94.9}$ & $\mathbf{45.9}$ & $\mathbf{69.9}$ & $\mathbf{80.5}$ \\
        \bottomrule
    \end{tabular}
    \caption{Performance of SCE for video retrieval on the first split of UCF101 and HMDB51. \textbf{Res$_p$}, \textbf{Res$_e$} means the resolution for pretraining and evaluation. \textbf{T$_p$}, \textbf{T$_e$} means the number of frames used for pretraining and evaluation. We report the recall R@$1$, R@$5$, R@$10$. We obtain state of the art results for ResNet3D-18 on both benchmarks and further improve our results using the larger network ResNet3D-50.}
    \label{tab:video-sota-retrieval}
\end{table*}

\textbf{Action recognition.} We compare SCE on the linear evaluation protocol on Kinetics400 and finetuning on UCF101 and HMDB51. We kept the same implementation details as in \cref{Video Ablative study}. We compare our results with the state of the art in \cref{tab:video-sota-action} on various architectures. To propose a fair comparison, we indicate for each approach the pretraining dataset, the number of frames and resolution used during pre-training as well as during evaluation. For the unknown parameters, we leave the cell empty. We compared with some approaches that used the other visual modalities Optical Flow and RGB difference and the different convolutional backbones S3D \citep{Zhang2018} and R(2+1)D-18 \citep{Tran2018}. 

On ResNet3D-18 even when comparing with methods using several modalities, by using $8 \times 224^2$ frames we obtain state-of-the-art results on the three benchmarks with $\mathbf{59.8\%}$ accuracy on Kinetics400, $\mathbf{90.9\%}$ on UCF101, $\mathbf{65.7\%}$ on HMDB51. Using $16 \times 112^2$ frames, which is commonly used with this network, improved by $+0.9$p.p on HMDB51 and decreased by $-3.2$p.p on kinetics400 and $-1.8$ on UCF101 and keep state of the art results on all benchmarks, except on UCF101 with $-0.5$p.p compared with \citet{Duan2022} using RGB and RGB difference modalities.

On ResNet3D-50, we obtain state-of-the-art results using $16 \times 224^2$ frames on HMDB51 with $\mathbf{74.7\%}$ accuracy even when comparing with methods using several modalities. On UCF101, with $\mathbf{95.3}$\% SCE is on par with the state of the art, $-0.2$p.p. than \citep{Feichtenhofer2021}, but on Kinetics400 $-1.9$p.p for $\mathbf{69.6\%}$. We have the same computational budget as they use $4$ views for pretraining. Using 8 frames decreased performance by $-2.0$p.p., $-1.2$p.p. and $-4.2$p.p on Kinetics400,UCF101 and HMDB51. It maintains results that outperform on the three benchmarks $\rho$MoCo and $\rho$BYOL with 2 views. It suggests that SCE is more efficient with fewer resources than these methods. By comparing our best with approaches on the S3D backbone that better fit smaller datasets, SCE has slightly lower performance than the state of the art: $-1.0$p.p. on UCF101 and $-0.3$p.p. on HMDB51.

\begin{table*}
    \centering
    \footnotesize
    \begin{tabular}{lcc|c|ccc}\toprule
                                &       &    & \emph{Linear protocol}                     & \multicolumn{3}{c}{\emph{Finetuning accuracy}}                                                                                             \\
        Method                  & views & T  & K400                                       & UCF101                                         & AVA (mAP)                                   & SSv2                                        \\ \midrule
        Supervised              & 1     & 8  & $\textbf{74.7}$                            & $94.8$                                         & $22.2$                                      & $52.8$                                      \\ \midrule
        $\rho$SimCLR ($\rho=3$) & 3     & 8  & $62.0$ $\color{BrickRed}(-12.7)$           & $87.9$ $\color{BrickRed}(-6.9)$                & $17.6$ $\color{BrickRed}(-4.6)$             & $52.0$ $\color{BrickRed}(-0.8)$             \\
        $\rho$SwAV ($\rho=3$)   & 3     & 8  & $62.7$ $\color{BrickRed}(-12.0)$           & $89.4$ $\color{BrickRed}(-5.4)$                & $18.2$ $\color{BrickRed}(-4.0)$             & $51.7$ $\color{BrickRed}(-1.1)$             \\
        $\rho$BYOL ($\rho=3$)   & 3     & 8  & $68.3$ $\color{BrickRed}(-6.4)\phantom{0}$ & $93.8$ $\color{BrickRed}(-1.0)$                & $\textbf{23.4}$ $\color{ForestGreen}(+1.2)$ & $55.8$ $\color{ForestGreen}(+3.0)$          \\
        $\rho$MoCo ($\rho=3$)   & 3     & 8  & $67.3$ $\color{BrickRed}(-7.4)\phantom{0}$ & $92.8$ $\color{BrickRed}(-2.0)$                & $20.3$ $\color{BrickRed}(-1.9)$             & $54.4$ $\color{ForestGreen}(+1.8)$          \\ \midrule
        SCE (Ours)              & 2     & 8  & $67.6$ $\color{BrickRed}(-7.1)\phantom{0}$ & $94.1$ $\color{BrickRed}(-0.7)$                & $20.3$ $\color{BrickRed}(-1.9)$             & $53.9$ $\color{ForestGreen}(+1.1)$          \\
        SCE (Ours)              & 2     & 16 & $69.6$ $\color{BrickRed}(-5.1)\phantom{0}$ & $\textbf{95.5}$ $\color{ForestGreen}(+0.7)$    & $21.6$ $\color{BrickRed}(-0.6)$             & $\textbf{57.2}$ $\color{ForestGreen}(+4.4)$ \\
        \bottomrule
    \end{tabular}
    \caption{Performance of SCE in comparison with \citet{Feichtenhofer2021} for linear evaluation on Kinetics400 and finetuning on the first split of UCF101, AVA and SSv2. SCE is on par with $\rho$MoCo for fewer views. Increasing the number of frames outperforms $\rho$BYOL on Kinetics400, UCF101 and SSv2.}
    \label{tab:video-sota-generalization}
\end{table*}

\textbf{Video retrieval.} We performed video retrieval on our pretrained backbones on the first split of UCF101 and HMDB51. To perform this task, we extract from the training and testing splits the features using the 30-crops procedure as for action recognition, detailed in Appendix D.3. We query for each video in the testing split the $N$ nearest neighbors ($N\in\{1,5,10\}$) in the training split using cosine similarities. We report the recall R@$N$ for the different $N$ in \cref{tab:video-sota-retrieval}.

We compare our results with the state of the art on ResNet3D-18. Our proposed SCE with $16 \times 112^2$ frames is Top-1 on UCF101 with $\mathbf{74.5\%}$, $\mathbf{85.6\%}$ and $\mathbf{90.5\%}$ for R@$1$, R@$5$ and R@$10$. Using $8 \times 224^2$ frames slightly decreases results that are still state of the art. On HMDB51, SCE with $8 \times 224^2$ frames outperforms the state of the art with $\mathbf{40.1\%}$, $\mathbf{63.3\%}$ and $\mathbf{75.4\%}$ for R@$1$, R@$5$ and R@$10$. Using $16 \times 112^2$ frames decreased results that are competitive with the previous state of the art approach \citep{Park2022} for $-2.3$p.p., $+1.5$p.p. and $-1.4$p.p. on R@$1$, R@$5$ and R@$10$.

We provide results using the larger architecture ResNet3d-50 which increases our performance on both benchmarks and outperforms the state of the art on all metrics to reach $\mathbf{83.9\%}$, $\mathbf{92.2\%}$ and $\mathbf{94.9\%}$ for R@$1$, R@$5$ and R@$10$ on UCF101 as well as $\mathbf{45.9\%}$, $\mathbf{69.9\%}$ and $\mathbf{80.5\%}$ for R@$1$, R@$5$ and R@$10$ on HMDB51. Our soft contrastive learning approach makes our representation learn features that cluster similar instances even for generalization.

\textbf{Generalization to downstream tasks.} We follow the protocol introduced by \citet{Feichtenhofer2021} to compare the generalization of our ResNet3d-50 backbone on Kinetics400, UCF101, AVA and SSv2 with $\rho$SimCLR, $\rho$SwAV, $\rho$BYOL, $\rho$MoCo and supervised learning in \cref{tab:video-sota-generalization}. To ensure a fair comparison, we provide the number of views used by each method and the number of frames per view for pretraining and evaluation. 

For 2 views and 8 frames, SCE is on par with $\rho$MoCo with 3 views on Kinetics400, AVA and SSv2 but is worst than $\rho$BYOL especially on AVA. For UCF101, results are better than $\rho$MoCo and on par with $\rho$BYOL. These results indicate that our approach proves more effective than contrastive learning as it reaches similar results than $\rho$MoCo using one less view. Using 16 frames, SCE outperforms all approaches, including supervised training, on UCF101 and SSv2 but performs worse on AVA than $\rho$Byol and supervised training. This study shows that SCE can generalize to various video downstream tasks which is a criteria of a good learned representation.

\section{Conclusion}\label{Conclusion}
In this paper we introduced a self-supervised soft contrastive learning approach called Similarity Contrastive Estimation (SCE). It contrasts pairs of asymmetrical augmented views with other instances while maintaining relations among instances. SCE leverages contrastive learning and relational learning and improves the performance over optimizing only one aspect. We showed that it is competitive with the state of the art on the linear evaluation protocol on ImageNet, on video representation learning and to generalize to several image and video downstream tasks. We proposed a simple but effective initial estimation of the true distribution of similarity among instances. An interesting perspective would be to propose a finer estimation of this distribution.

\backmatter

% \bmhead{Supplementary information}

\bmhead*{Acknowledgments}
This publication was made possible by the use of the Factory-AI supercomputer, financially supported by the Ile-de-France Regional Council, and the HPC resources of IDRIS under the allocation 2022-AD011013575 made by GENCI.

\bmhead*{Declarations}
The authors have no relevant financial or non-financial interests to disclose.

\bibliography{arxiv}
\clearpage

\begin{appendix}

\clearpage
\section{Pseudo-Code of SCE}\label{sec-code}

\begin{minipage}{\textwidth}
\centering
\begin{lstlisting}[language=Python, xleftmargin=0.5cm, breaklines=false, numbers=left, caption=Pseudo-Code of SCE in Pytorch style, basicstyle=\scriptsize]
# dataloader: loader of batches
# bsz: batch size
# epochs: number of epochs
# T1: weak distribution of data augmentations
# T2: strong distribution of data augmentations
# f_s, g_s, h_s: online encoder, projector, and optional predictor
# f_t, g_t: momentum encoder and projector 
# queue: memory buffer
# tau: online temperature
# tau_m: momentum temperature
# lambda_: coefficient between contrastive and relational aspects
# symmetry_loss: if True, symmetries the loss

def sce_loss(z1, z2):
    sim2_pos = zeros(bsz)
    sim2_neg = einsum("nc,kc->nk", z2, queue)
    sim2 = cat([sim2_pos, sim2_neg]) / tau_m
    s2 = softmax(sim2)
    w2 = lambda_ * one_hot(sim2_pos, bsz+1) + (1 - lambda_) * s
    
    sim1_pos = einsum("nc,nc->n", z1, z2)
    sim1_neg = einsum("nc,kc->nk", z1, queue)
    sim1 = cat([sim1_pos, sim1_neg]) / tau
    p1 = softmax(sim1)
    
    loss = cross_entropy(p1, w2)
    return loss
    
for i in range(epochs):
    for x in dataloader:
        x1, x2 = T1(x), T2(x)
        
        z1_s, z2_t = h_s(g_s(f_s(x1))), g_t(f_t(x2))
        z2_t = stop_grad(z2_t)
        
        loss = sce_loss(z1_s, z2_t)
        if symmetry_loss:
            z1_t, z2_s = g_t(f_t(x1)), h_s(g_s(f_s(x2)))
            z1_t = stop_grad(z1_t)
            loss += sce_loss(z2_s, z1_t)
            loss /= 2
        loss.backward()
        
        update(f_s.params)
        update(g_s.params)
        update(h_s.params)
        momentum_update(f_t.params, f_s.params)
        momentum_update(g_t.params, g_s.params)
        
        fifo_update(queue, z2_t)
        if symmetry_loss:
            fifo_update(queue, z1_t)
\end{lstlisting}
\end{minipage}

\clearpage
\section{Proof Proposition 1.}\label{sec-imple-proof}

\begin{prop*}
$L_{SCE}$ defined as
\footnotesize
$$L_{SCE} = - \frac{1}{N} \sum_{i=1}^N\sum_{k=1}^N w^2_{ik} \log\left(p^1_{ik}\right),$$
\normalsize
can be written as:
\footnotesize
\begin{equation*}
  L_{SCE} = \lambda \cdot L_{InfoNCE} + \mu \cdot L_{ReSSL} +  \eta \cdot L_{ceil},
\end{equation*}
\normalsize
with \footnotesize$\mu = \eta = 1 - \lambda$ \normalsize and
\footnotesize
\begin{equation*}
  L_{Ceil} = - \frac{1}{N} \sum_{i=1}^{N}\log\left(\frac{\sum_{j=1}^{N}\mathbbm{1}_{i \neq j} \cdot \exp(\mathbf{z^1_i} \cdot \mathbf{z^2_j} / \tau)}{\sum_{j=1}^{N} \exp(\mathbf{z^1_i} \cdot \mathbf{z^2_j} / \tau)}\right).
\end{equation*}
\end{prop*}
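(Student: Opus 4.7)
The plan is to start from the definition
$$L_{SCE} = -\frac{1}{N}\sum_{i=1}^N\sum_{k=1}^N w^2_{ik}\log\bigl(p^1_{ik}\bigr)$$
and substitute $w^2_{ik}=\lambda\,\mathbbm{1}_{i=k}+(1-\lambda)\,s^2_{ik}$. By linearity this splits $L_{SCE}$ into a ``$\lambda$-piece'' and a ``$(1-\lambda)$-piece''. The $\lambda$-piece reduces immediately to $\lambda\,L_{InfoNCE}$: only the diagonal $k=i$ term survives the Kronecker delta, and since $p^1_{ii}$ coincides exactly with the softmax ratio inside $L_{InfoNCE}$ (both use $\tau$ and the full denominator over $j=1,\ldots,N$), the identification is direct.

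The nontrivial work is the $(1-\lambda)$-piece, namely $-\tfrac{1-\lambda}{N}\sum_i\sum_k s^2_{ik}\log p^1_{ik}$. First observe that $s^2_{ik}=0$ when $k=i$ thanks to the indicator $\mathbbm{1}_{i\neq k}$, so the inner sum reduces to $k\neq i$. Second, I will reconcile the denominator mismatch between $p^1_{ik}$ (summed over all $j$) and $s^1_{ik}$ (summed over $j\neq i$) by the elementary identity
$$\log p^1_{ik}=\log s^1_{ik}+\log\!\left(\frac{\sum_{j\neq i}\exp(\mathbf{z^1_i}\!\cdot\!\mathbf{z^2_j}/\tau)}{\sum_{j=1}^N\exp(\mathbf{z^1_i}\!\cdot\!\mathbf{z^2_j}/\tau)}\right),$$
valid for $k\neq i$, which follows by writing both logarithms as $\mathbf{z^1_i}\!\cdot\!\mathbf{z^2_k}/\tau$ minus their respective log-partition functions.

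Plugging this decomposition in, the first piece $-\tfrac{1-\lambda}{N}\sum_i\sum_{k\neq i}s^2_{ik}\log s^1_{ik}$ is by definition $(1-\lambda)\,L_{ReSSL}$. The second piece factors: the bracketed correction term $R_i$ does not depend on $k$, so it pulls out of the inner sum, leaving $\sum_{k\neq i}s^2_{ik}$. Since $s^2_{i\cdot}$ is a probability distribution supported on $\{k\neq i\}$, this sum equals $1$, and the residual collapses to $(1-\lambda)\,L_{Ceil}$. Setting $\mu=\eta=1-\lambda$ completes the identification.

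The main obstacle is the denominator mismatch between $p^1$ and $s^1$; everything else is bookkeeping. The key mechanical observation that makes the proof close cleanly is that $s^2_{i\cdot}$ being normalised on $\{k\neq i\}$ is exactly what is needed to turn the leftover correction term into $L_{Ceil}$ without any dependence on the specific values of $s^2$.
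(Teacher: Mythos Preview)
Your proposal is correct and follows essentially the same route as the paper's proof: split $w^2_{ik}$ into its $\lambda\,\mathbbm{1}_{i=k}$ and $(1-\lambda)\,s^2_{ik}$ parts, identify the diagonal contribution with $L_{InfoNCE}$, and then handle the off-diagonal $(1-\lambda)$-piece by rewriting $\log p^1_{ik}=\log s^1_{ik}+R_i$ and using that $s^2_{i\cdot}$ sums to one on $\{k\neq i\}$ so the $k$-independent correction $R_i$ yields $L_{Ceil}$. The only cosmetic difference is that the paper first separates the diagonal term from the off-diagonal sum before substituting the expression for $w^2_{ik}$, whereas you substitute first and then separate; the underlying logic is identical.
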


\begin{proof}
\normalsize
Recall that:
\footnotesize
\begin{equation*}
\setlength{\abovedisplayskip}{12pt}
\setlength{\belowdisplayskip}{12pt}
\begin{aligned}
    p^1_{ik} &= \frac{\exp(\mathbf{z^1_i} \cdot \mathbf{z^2_k} / \tau)}{\sum_{j=1}^{N}\exp(\mathbf{z^1_i} \cdot \mathbf{z^2_j} / \tau)}, \\
    s^2_{ik} &= \frac{\mathbbm{1}_{i \neq k} \cdot \exp(\mathbf{z^2_i} \cdot \mathbf{z^2_k} / \tau_m)}{\sum_{j=1}^{N}\mathbbm{1}_{i \neq j} \cdot \exp(\mathbf{z^2_i} \cdot \mathbf{z^2_j} / \tau_m)}, \\
    w^2_{ik} &= \lambda \cdot \mathbbm{1}_{i=k} + (1 - \lambda) \cdot s^2_{ik}.
\end{aligned}
\end{equation*}
\normalsize
We decompose the second loss over $k$ in the definition of $L_{SCE}$ to make the proof:
\footnotesize
\begin{equation*}
\setlength{\abovedisplayskip}{12pt}
\setlength{\belowdisplayskip}{12pt}
\begin{aligned}
    L_{SCE} = & - \frac{1}{N} \sum_{i=1}^N\sum_{k=1}^N w^2_{ik} \log\left(p^1_{ik}\right) \\
            = & - \frac{1}{N} \sum_{i=1}^N \left[ w^2_{ii} \log\left(p^1_{ii}\right) + \sum_{\substack{k=1 \\ k\neq i}}^N w^2_{ik} log\left(p^1_{ik}\right)\right] \\
            = & \underbrace{- \frac{1}{N} \sum_{i=1}^N w^2_{ii} \left(p^1_{ii}\right)}_{(1)} \underbrace{- \frac{1}{N} \sum_{i=1}^N\sum_{\substack{k=1 \\ k\neq i}}^N w^2_{ik} \log\left(p^1_{ik}\right)}_{(2)}. \\
\end{aligned}
\end{equation*}
\normalsize
First we rewrite $(1)$ to retrieve the $L_{InfoNCE}$ loss.
\footnotesize
\begin{equation*}
\setlength{\abovedisplayskip}{12pt}
\setlength{\belowdisplayskip}{12pt}
\begin{aligned}
    (1) = & - \frac{1}{N} \sum_{i=1}^N w^2_{ii} log\left(p^1_{ii}\right) \\
        = & - \frac{1}{N} \sum_{i=1}^N \lambda \cdot \log\left(p^1_{ii}\right) \\
        = & - \lambda \cdot \frac{1}{N} \sum_{i=1}^N \log\left(\frac{\exp(\mathbf{z^1_i} \cdot \mathbf{z^2_i} / \tau)}{\sum_{j=1}^{N}\exp(\mathbf{z^1_i} \cdot \mathbf{z^2_j} / \tau)}\right) \\
        = & \lambda \cdot L_{InfoNCE}.
\end{aligned}
\end{equation*}
\normalsize
Now we rewrite $(2)$ to retrieve the $L_{ReSSL}$ and $L_{Ceil}$ losses.
\footnotesize
\begin{equation*}
\setlength{\abovedisplayskip}{12pt}
\setlength{\belowdisplayskip}{0pt}
\begin{aligned}
    (2) = & - \frac{1}{N} \sum_{i=1}^N\sum_{\substack{k=1 \\ k\neq i}}^N w^2_{ik} \log\left(p^1_{ik}\right) \\
        = & - \frac{1}{N} \sum_{i=1}^N\sum_{\substack{k=1 \\ k\neq i}}^N (1 - \lambda) \cdot s^2_{ik} \cdot \log\left(p^1_{ik}\right) \\
        = & - (1 - \lambda) \cdot \frac{1}{N} \sum_{i=1}^N\sum_{k=1}^N s^2_{ik} \cdot \log\left(p^1_{ik}\right) \\
        = & - (1 - \lambda) \cdot \frac{1}{N} \sum_{i=1}^N\sum_{k=1}^N \left[ s^2_{ik} \cdot \log\left(\frac{\exp(\mathbf{z^1_i} \cdot \mathbf{z^2_k} / \tau)}{\sum_{j=1}^{N}\exp(\mathbf{z^1_i} \cdot \mathbf{z^2_j} / \tau)}\right) \right]\\
\end{aligned}
\end{equation*}
\begin{equation*}
\setlength{\abovedisplayskip}{0pt}
\setlength{\belowdisplayskip}{12pt}
\begin{aligned}
        &\phantom{(2)} = - (1 - \lambda) \cdot \frac{1}{N} \sum_{i=1}^N\sum_{k=1}^N \Bigg[ s^2_{ik} \cdot \Bigg( \\
          & \phantom{ = - (1 - \lambda) \cdot} \log\left(\exp(\mathbf{z^1_i} \cdot \mathbf{z^2_k} / \tau)\right) - \log\left(\sum_{j=1}^{N}\exp(\mathbf{z^1_i} \cdot \mathbf{z^2_j} / \tau)\right)\Bigg)\Bigg] \\
        &\phantom{(2)} = - (1 - \lambda) \cdot \frac{1}{N} \sum_{i=1}^N\sum_{k=1}^N \Bigg[ s^2_{ik} \cdot \Bigg( \\
        & \phantom{ = - (1 - \lambda) \cdot} \log\left(\exp(\mathbf{z^1_i} \cdot \mathbf{z^2_k} / \tau)\right) - \log\left(\sum_{j=1}^{N}\exp(\mathbf{z^1_i} \cdot \mathbf{z^2_j} / \tau)\right) + \\
          & \phantom{ = - (1 - \lambda) \cdot} \log\left(\sum_{j=1}^{N} \mathbbm{1}_{i \neq j} \cdot \exp(\mathbf{z^1_i} \cdot \mathbf{z^2_j} / \tau)\right) - \\
          & \phantom{ = - (1 - \lambda) \cdot} \log\left(\sum_{j=1}^{N} \mathbbm{1}_{i \neq j} \cdot \exp(\mathbf{z^1_i} \cdot \mathbf{z^2_j} / \tau)\right) \Bigg)\Bigg] \\
        &\phantom{(2)} =  - (1 - \lambda) \cdot \frac{1}{N} \sum_{i=1}^N\sum_{k=1}^N \Bigg[ s^2_{ik} \cdot \Bigg( \\
          & \phantom{ = - (1 - \lambda)} \log\left(\frac{\exp(\mathbf{z^1_i} \cdot \mathbf{z^2_k} / \tau)}{\sum_{j=1}^{N} \mathbbm{1}_{i \neq j} \cdot \exp(\mathbf{z^1_i} \cdot \mathbf{z^2_j} / \tau)}\right) + \\
          & \phantom{ = - (1 - \lambda)} \log\left(\frac{\sum_{j=1}^{N} \mathbbm{1}_{i \neq j} \cdot \exp(\mathbf{z^1_i} \cdot \mathbf{z^2_j} / \tau)}{\sum_{j=1}^{N}\exp(\mathbf{z^1_i} \cdot \mathbf{z^2_j} / \tau)}\right)\Bigg)\Bigg] \\
        &\phantom{(2)} = - (1 - \lambda) \cdot \frac{1}{N} \Bigg( \\
        & \sum_{i=1}^N\sum_{k=1}^N \Bigg[ s^2_{ik} \cdot \log\left(\frac{\exp(\mathbf{z^1_i} \cdot \mathbf{z^2_k} / \tau)}{\sum_{j=1}^{N} \mathbbm{1}_{i \neq j} \cdot \exp(\mathbf{z^1_i} \cdot \mathbf{z^2_j} / \tau)}\right)\Bigg] + \\
        & \sum_{i=1}^N\sum_{k=1}^N \Bigg[ s^2_{ik} \cdot \log\left(\frac{\sum_{j=1}^{N} \mathbbm{1}_{i \neq j} \cdot \exp(\mathbf{z^1_i} \cdot \mathbf{z^2_j} / \tau)}{\sum_{j=1}^{N}\exp(\mathbf{z^1_i} \cdot \mathbf{z^2_j} / \tau)}\right)\Bigg]\Bigg). \\
\end{aligned}
\end{equation*}
\normalsize
Because $s^2_{ii} = 0$ and $\mathbf{s^2_i}$ is a probability distribution, we have:
\begin{equation*}
\footnotesize
\begin{aligned}
        (2) = & - (1 - \lambda) \cdot \\
        & \frac{1}{N} \sum_{i=1}^N\sum_{\substack{k=1 \\ k\neq i}}^N \Bigg[ s^2_{ik} \cdot \log\left(\frac{\mathbbm{1}_{i \neq k} \cdot \exp(\mathbf{z^1_i} \cdot \mathbf{z^2_k} / \tau)}{\sum_{j=1}^{N} \mathbbm{1}_{i \neq j} \cdot \exp(\mathbf{z^1_i} \cdot \mathbf{z^2_j} / \tau)}\right)\Bigg] - \\
        & (1 - \lambda) \cdot \frac{1}{N} \sum_{i=1}^N \Bigg[ \log\left(\frac{\sum_{j=1}^{N} \mathbbm{1}_{i \neq j} \cdot \exp(\mathbf{z^1_i} \cdot \mathbf{z^2_j} / \tau)}{\sum_{j=1}^{N}\exp(\mathbf{z^1_i} \cdot \mathbf{z^2_j} / \tau)}\right) \Bigg] \\
        = & (1 - \lambda) \cdot L_{ReSSL} + (1 - \lambda) \cdot L_{Ceil}.
\end{aligned}
\end{equation*}
\normalsize
\end{proof}

\begin{table*}[t]
    \small
    \centering
    \begin{tabular}{cc|cccc|cccccc} \toprule
        \multirow{2}{*}{Backbone} & \multirow{2}{*}{Dataset} & \multicolumn{4}{c|}{Projector} & \multirow{2}{*}{Input} & \multirow{2}{*}{Buffer} & \multirow{2}{*}{ema} & \multirow{2}{*}{LR} & \multirow{2}{*}{Batch} & \multirow{2}{*}{WD}\\
        & & Layers & Hid dim & Out dim & BN & & & & &\\ \midrule
        R-18 & CIFAR & 2 & 512 & 128 & hid & $32^2$ & $4,096$ & 0.900 & 0.06 & $256$ & $5e^{-4}$ \\ 
        R-18 & STL10 & 2 & 512 &  128 & hid & $96^2$ & $16,384$ & 0.996 & 0.06 & $256$ & $5e^{-4}$ \\ 
        R-18 & Tiny-IN & 2 & 512 & 128 & hid & $64^2$ & $16,384$ & 0.996 & 0.06 & $256$ & $5e^{-4}$ \\ 
        R-50 & IN100 & 2 & 4096 & 256 & no & $224^2$ & $65,536$ & 0.996 & 0.3 & $512$ & $1e^{-4}$ \\ 
        R-50 & IN1k & 3 & 2048 & 256 & all & $224^2$ & $65,536$ & 0.996 & 0.5 & $512$ & $1e^{-4}$ \\ 
        \bottomrule  
    \end{tabular}
    \caption{Architecture and hyperparameters used for pretraining on the different datasets. \textbf{LR} stands for the initial learning rate, \textbf{WD} for weight decay, \textbf{BN} for batch normalization \citep{Ioffe2015}, \textbf{Hid} for hidden, \textbf{Dim} for dimension, \textbf{ema} for the initial momentum value used to update the momentum branch. For \textbf{BN}: ``no" means no batch normalization is used in the projector, ``hid" means batch normalization after each hidden layer, ``all" means batch normalization after the hidden layer and the output layer.}
    \label{tab:specifc-hyper-pretrain}
\end{table*}

\section{Classes to construct ImageNet100}\label{sec-classes}
To build the ImageNet100 dataset, we used the classes shared by the CMC \citep{Tian2020} authors in the supplementary material of their publication. We also share these classes in \cref{tab:classes}.

\begin{table}[h]
\centering
\begin{tabular}{cccc}\toprule
\multicolumn{4}{c}{100 selected classes from ImageNet} \\ \midrule
n02869837 & n01749939 & n02488291 & n02107142 \\
n13037406 & n02091831 & n04517823 & n04589890 \\
n03062245 & n01773797 & n01735189 & n07831146 \\
n07753275 & n03085013 & n04485082 & n02105505 \\
n01983481 & n02788148 & n03530642 & n04435653 \\
n02086910 & n02859443 & n13040303 & n03594734 \\
n02085620 & n02099849 & n01558993 & n04493381 \\
n02109047 & n04111531 & n02877765 & n04429376 \\
n02009229 & n01978455 & n02106550 & n01820546 \\
n01692333 & n07714571 & n02974003 & n02114855 \\
n03785016 & n03764736 & n03775546 & n02087046 \\
n07836838 & n04099969 & n04592741 & n03891251 \\
n02701002 & n03379051 & n02259212 & n07715103 \\
n03947888 & n04026417 & n02326432 & n03637318 \\
n01980166 & n02113799 & n02086240 & n03903868 \\
n02483362 & n04127249 & n02089973 & n03017168 \\
n02093428 & n02804414 & n02396427 & n04418357 \\
n02172182 & n01729322 & n02113978 & n03787032 \\
n02089867 & n02119022 & n03777754 & n04238763 \\
n02231487 & n03032252 & n02138441 & n02104029 \\
n03837869 & n03494278 & n04136333 & n03794056 \\
n03492542 & n02018207 & n04067472 & n03930630 \\
n03584829 & n02123045 & n04229816 & n02100583 \\
n03642806 & n04336792 & n03259280 & n02116738 \\
n02108089 & n03424325 & n01855672 & n02090622 \\
\bottomrule
\end{tabular}
\caption{The 100 classes selected from ImageNet to construct ImageNet100.}
\label{tab:classes}
\end{table}

\section{Implementation details}\label{sec-imple}
\subsection{Ablation study and baseline comparison for images}\label{subsec-imple-images}

\textbf{Pretraining Implementation details.} We use the ResNet-50 \citep{He2016} encoder for large datasets and ResNet-18 for small and medium datasets with changes detailed below. We pretrain the models for 200 epochs. We apply by default \emph{strong} and \emph{weak} data augmentations, defined in Tab. 1 in the main paper, with the scaling range for the random resized crop set to $(0.2, 1.0)$. Specific hyperparameters for each dataset for the projector construction, the size of the input, the size of the memory buffer, the initial momentum value, the initial learning rate, the batch size and the weight decay applied can be found in \cref{tab:specifc-hyper-pretrain}. We use the SGD optimizer \citep{Sutskever2013} with a momentum of 0.9. A linear warmup is applied during 5 epochs to reach the initial learning rate. The learning rate is scaled using the linear scaling rule and follows the cosine decay scheduler without restart \citep{Loshchilov2016}. The momentum value to update the target branch follows a cosine strategy from its initial value to reach 1 at the end of training. We do not symmetrize the loss by default.

\textbf{Architecture change for small and medium datasets.} Because the images are smaller, and ResNet is suitable for larger images, typically $224 \times 224$, we follow guidance from SimCLR \citep{Chen2020b} and replace the first $7 \times 7$ Conv of stride $2$ with a $3\times3$ Conv of stride $1$. We also remove the first pooling layer.

\textbf{Evaluation protocol.} To evaluate our pretrained encoders, we train a linear classifier following \citep{Chen2020a, Zheng2021}. We train for 100 epochs on top of the frozen pretrained encoder using an SGD optimizer with an initial learning rate of $30$ without weight decay and a momentum of 0.9. A scheduler is applied to the learning rate that is decayed by a factor of $0.1$ at 60 and 80 epochs. The data augmentations for the different datasets are:
\begin{itemize}[noitemsep,topsep=0pt]
    \item \textbf{training set for large datasets}: random resized crop to resolution $224 \times 224$ with the scaling range set to $(0.08, 1.0)$ and a random horizontal flip with a probability of $0.5$.
    \item \textbf{training set for small and medium datasets}: random resized crop to the dataset resolution with a padding of $4$ for small datasets and the scaling range set to $(0.08, 1.0)$. Also, a random horizontal flip with a probability of $0.5$ is applied.
    \item \textbf{validation set for large datasets}: resize to resolution $256 \times 256$ and center crop to resolution $224 \times 224$.
     \item \textbf{validation set for small and medium datasets}: resize to the dataset resolution.
\end{itemize}

\subsection{Imagenet study}\label{subsec-imple-imagenet}

\textbf{Pretraining implementation details.} We use the ResNet-50 \citep{He2016} encoder and apply \emph{strong-$\alpha$} and \emph{strong-$\beta$} augmentations, defined in Tab. 1 in the main paper, with the scaling range for the random resized crop set to $(0.2, 1.0)$. The batch size is set to 4096 and the memory buffer to 65,536. We follow the same training hyperparameters as \citep{Chen2021b} for the architecture. Specifically, we use the same projector and predictor, the LARS optimizer \citep{Ginsburg2018} with a weight decay of $1.5\cdot10^{-6}$ for 1000 epochs of training and $10^{-6}$ for fewer epochs. Bias and batch normalization \citep{Ioffe2015} parameters are excluded. The initial learning rate is $0.5$ for 100 epochs and $0.3$ for more epochs. It is linearly scaled for 10 epochs and it follows the cosine annealed scheduler. The momentum value follows a cosine scheduler from 0.996 for 1000 epochs, 0.99 for fewer epochs, to reach 1 at the end of training. The loss is symmetrized. For SCE specific hyperparameters, we keep the best from ablation study: $\lambda = 0.5$, $\tau = 0.1$ and $\tau_m = 0.07$.

\textbf{Multi-crop setting.} We follow \citet{Hu2021} and sample 6 different views. The first two views are global views as without multi-crop, meaning resolution of $224 \times 224$ and the scaling range for random resized crop set to $(0.2, 1.0)$. The 4 local crops have a resolution of $192 \times 192$, $160 \times 160$, $128 \times 128$, $96 \times 96$ and scaling range ($0.172$, $0.86$), ($0.143$, $0.715$), ($0.114$, $0.571$), ($0.086$, $0.429$) on which we apply the \emph{strong-$\gamma$} data augmentation defined in Tab. 1 in the main paper.

\textbf{Evaluation protocol.} We follow the protocol defined by \citep{Chen2021b}. Specifically, we train a linear classifier for 90 epochs on top of the frozen encoder with a batch size of 1024 and a SGD optimizer with a momentum of 0.9 and without weight decay. The initial learning rate is $0.1$ and scaled using the linear scaling rule and follows the cosine decay scheduler without restart \citep{Loshchilov2016}. The data augmentations applied are:
\begin{itemize}[noitemsep,topsep=0pt]
    \item \textbf{training set}: random resized crop to resolution $224 \times 224$ with the scaling range set to $(0.08, 1.0)$ and a random horizontal flip with a probability of $0.5$.
    \item \textbf{validation set}: resize to resolution $256 \times 256$ and center crop to resolution $224 \times 224$.
\end{itemize} 

\begin{table}[]
    \centering
    \small
    \begin{tabular}{c|c|c} \toprule
    stage & ResNet3d-18 & ResNet3D-50 \\ \midrule
    \multirow{2}{*}{conv1} & $1\times 7^2, 64$ & $1\times 7^2, 64$ \\
                           & stride $1, 2^2$ & stride $1, 2^2$ \\ \midrule  
    \multirow{2}{*}{pool1} & $1\times 3^2, max$ & $1\times 3^2, max$ \\
                           & stride $1, 2^2$ & stride $1, 2^2$ \\ \midrule
    res$_2$ &  $\begin{bmatrix} 1 \times 3^2, 64 \\ 1 \times 3^2, 64 \end{bmatrix} \times 2$ & $\begin{bmatrix} 1 \times 1^2, 64\phantom{0} \\ 1 \times 3^2, 64\phantom{0} \\ 1 \times 1^2, 256 \end{bmatrix} \times 3$ \\ \midrule  
    res$_3$ &  $\begin{bmatrix} 1 \times 3^2, 128 \\ 1 \times 3^2, 128 \end{bmatrix} \times 2$ & $\begin{bmatrix} 1 \times 1^2, 128 \\ 1 \times 3^2, 128 \\ 1 \times 1^2, 512 \end{bmatrix} \times 4$ \\ \midrule  
    res$_4$ &  $\begin{bmatrix} 3 \times 3^2, 256 \\ 1 \times 3^2, 256 \end{bmatrix} \times 2$ & $\begin{bmatrix} 3 \times 1^2, 256\phantom{0} \\ 1 \times 3^2, 256\phantom{0} \\ 1 \times 1^2, 1024 \end{bmatrix} \times 6$ \\ \midrule  
    res$_5$ &  $\begin{bmatrix} 3 \times 3^2, 512 \\ 1 \times 3^2, 512 \end{bmatrix} \times 2$ & $\begin{bmatrix} 3 \times 1^2, 512\phantom{0} \\ 1 \times 3^2, 512\phantom{0} \\ 1 \times 1^2, 2048 \end{bmatrix} \times 3$ \\ \midrule
    pool & global average & global average \\ \bottomrule
    \end{tabular}
    \caption{ResNet3D-18 and ResNet3D-50 networks.}
    \label{tab:r3d}
\end{table}

\begin{table*}[t]
    \small
    \centering
    \begin{tabular}{cc|cccc|cccc|c} \toprule
        \multirow{2}{*}{Backbone} & \multirow{2}{*}{Dataset} & \multicolumn{4}{c|}{Projector} & \multicolumn{4}{c|}{Predictor} & \multirow{2}{*}{Buffer}\\
        & & Layers & Hid dim & Out dim & BN & Layers & Hid dim & Out dim & BN &\\ \midrule
        ResNet3D-18 & K200 & 3 & 1024 & 256 & all & 2 & 1024 & 256 & hid & 32768 \\ 
        ResNet3D-18 & K400 & 3 & 1024 & 256 & all & 2 & 1024 & 256 & hid & 65536 \\ 
        ResNet3D-50 & K400 & 3 & 4096 & 256 & all & 2 & 4096 & 256 & hid & 65536 \\ 
        \bottomrule  
    \end{tabular}
    \caption{Architecture and hyperparameters used for video pretraining. \textbf{BN} stands for for Batch Normalization, \textbf{Hid} for hidden, \textbf{Dim} for dimension. For \textbf{BN}: "hid" means batch normalization after each hidden layer, "all" means batch normalization after the hidden layer and the output layer.}
    \label{tab:specific-hyper-pretrain-video}
\end{table*}

\begin{table*}
    \centering
    \footnotesize
    \begin{tabular}{cccccccccc} \toprule
        Dataset & $\tau$ & $\tau_m = 0.03$ & $\tau_m = 0.04$ & $\tau_m = 0.05$ & $\tau_m = 0.06$ & $\tau_m = 0.07$ & $\tau_m = 0.08$ & $\tau_m = 0.09$ & $\tau_m = 0.1$  \\ \midrule
        CIFAR10 & 0.1 & 89.93 & 90.03 & 90.06 & 90.20 & 90.16 & 90.06 & 89.67 & 88.97 \\
        CIFAR10 & 0.2 & 89.98 & 90.12 & 90.12 & 90.05 & 90.13 & 90.09 & 90.22 & \textbf{90.34} \\ \midrule
        CIFAR100 & 0.1 & 64.49 & 64.90 & 65.19 & 65.33 & 65.27 & \textbf{65.45} & 64.89 & 63.87 \\ 
        CIFAR100 & 0.2 & 63.71 & 63.74 & 63.89 & 64.05 & 64.24 & 64.23 & 64.10 & 64.30 \\ \midrule
        STL10 & 0.1 & 89.34 & \textbf{89.94} & 89.87 & 89.84 & 89.72 & 89.52 & 88.99 & 88.41 \\
        STL10 & 0.2 & 88.4 & 88.23 & 88.4 & 88.35 & 87.54 & 88.32 & 88.80 & 88.59 \\ \midrule
        Tiny-IN & 0.1 & 50.23 & 51.12 & 51.41 & 51.66 &  \textbf{51.90} & 51.58 & 51.37 & 50.46 \\
        Tiny-IN & 0.2 & 48.56 & 48.85 & 48.35 & 48.98 & 49.06 & 49.15 & 49.66 & 49.64 \\ \bottomrule
    \end{tabular}
    \caption{Effect of varying the temperature parameters $\tau_m$ and $\tau$ on the Top-1 accuracy.}
    \label{tab:temperature-small}
\end{table*}

\subsection{Video study}\label{subsec-imple-video}
\textbf{Pretraining implementation details.} We used the ResNet3D-18 and ResNet3D-50 networks \citep{Hara2018} following the Slow path of \citet{Feichtenhofer2019}. The exact architecture details can be found in \cref{tab:r3d}. We kept the siamese architecture used for ImageNet in Sec. 4.1.3 and depending on the backbone and pretraining dataset, the projector and predictor architectures as well as the memory buffer size vary and are referenced in \cref{tab:specific-hyper-pretrain-video}. The LARS optimizer with a weight decay of $1.10^{-6}$, batch normalization and bias parameters excluded, for 200 epoch of training is used. The learning rate follows a linear warmup until it reaches an initial value of $2.4$ and then follows a cosine annealed scheduler. The initial learning rate is scaled following the linear scaling rule and the batch size is set to $512$. The momentum value follows a cosine scheduler from $0.99$ to $1$ and the loss is symmetrized. To sample and crop different views from a video, we follow \citet{Feichtenhofer2021} and sample randomly different clips from the video that lasts $2.56$ seconds. For Kinetics it corresponds to 64 frames for a frame rate per second (FPS) of 25. Out of this clip we keep a number of frames specified in the main paper. By default, we sample two different clips to form positives and we apply the \emph{strong-$\alpha$} and \emph{strong-$\beta$} augmentations, defined in Tab. 1 in the main paper, to the views.

\textbf{Linear evaluation protocol details.} We follow \citet{Feichtenhofer2021} and train a linear classifier for 60 epochs on top of the frozen encoder with a batch size of 512. We use the SGD optimizer with a momentum of $0.9$ and without weight decay to reach the initial learning rate $2$ that follows the linear scaling rule with the batch size set to 512. A linear warmup is applied during 35 epochs and then a cosine annealing scheduler. For training, we sample randomly a clip in the video and random crop to the size $224 \times 224$ after short scaling the video to $256$. An horizontal flip is also applied with a probability of $0.5$. For evaluation, we follow the standard evaluation protocol of \citet{Feichtenhofer2019} and sample 10 temporal clips with 3 different spatial crops of size $256 \times 256$ applied to each temporal clip to cover the whole video. The final prediction is the mean average of the predictions of the 30 clips sampled.

\textbf{Finetuning evaluation protocol details.} We follow \citet{Feichtenhofer2021} for finetuning on UCF101 and HMDB51. We finetune the whole pretrained network and perform supervised training on the 101 and 51 classes respectively for 200 epochs with dropout of probability 0.8 before classification. We use the SGD optimizer with a momentum of $0.9$ and without weight decay to reach the initial learning rate $0.1$ that follows the linear scaling rule with the batch size set to 64 and a cosine annealing scheduler without warmup. For training, we sample randomly a clip in the video and random crop to the size $224 \times 224$ after short scaling the video to $256$. We apply color jittering with the \emph{strong} augmentation parameters, defined in Tab. 1 in the main paper, and an horizontal flip with a probability of $0.5$. For evaluation, we follow the 30-crops procedure as for linear evaluation. Specific hyperparameter search for each dataset might improve results.

\section{Temperature influence on small and medium datasets}\label{sec-temp}

We made a temperature search on CIFAR10, CIFAR100, STL10 and Tiny-ImageNet by varying $\tau$ in $\{0.1, 0.2\}$ and $\tau_m$ in $\{0.03, ..., 0.10\}$. The results are in \cref{tab:temperature-small}. As for ImageNet100, we need a sharper distribution on the output of the momentum encoder. Unlike ReSSL \citep{Zheng2021}, SCE do not collapse when $\tau_m \rightarrow \tau$ thanks to the contrastive aspect. For our baselines comparison in Sec. 4.2, we use the best temperatures found for each dataset.

\end{appendix}

\end{document}